\definecolor{lavender}{rgb}{0.9, 0.9, 0.98}
\DeclareMathOperator{\E}{\mathbb{E}}
\DeclareMathOperator{\p}{\mathbb{P}}
\newtheorem{defn}{Definition}
\newtheorem{theorem}{Theorem}[section]
\newtheorem{lemma}[theorem]{Lemma}
\newtheorem{claim}[theorem]{Claim}
\newtheorem{proposition}[theorem]{Proposition}
\title{On the tightness of linear relaxation based robustness certification methods}
\author{Cheng Tang \footnote{This is a preliminary version of the paper (author contact info: chengtang48@gmail.com)}}
\date{}
\begin{document}

\maketitle

\begin{abstract}
There has been a rapid development and interest in adversarial training and defenses in the machine learning community in the recent years. 
One line of research focuses on improving the performance and efficiency of adversarial robustness certificates for neural networks \cite{gowal:19, wong_zico:18, raghunathan:18, WengTowardsFC:18, wong:scalable:18,  singh:convex_barrier:19, Huang_etal:19, single-neuron-relax:20, Zhang2020TowardsSA}. 
While each providing a certification to lower (or upper) bound the true distortion under adversarial attacks via relaxation, less studied was the tightness of relaxation.
In this paper, we analyze a family of linear outer approximation based certificate methods via a meta algorithm, IBP-Lin. The aforementioned works often lack quantitative analysis to answer questions such as how does the performance of the certificate method depend on the network configuration and the choice of approximation parameters. Under our framework, we make a first attempt at answering these questions, which reveals that the tightness of linear approximation based certification can depend heavily on the configuration of the trained networks.
\end{abstract}

\section{Introduction}
Given a $m$-layer neural network classifier $f^{1:m}: \mathbb{R}^{d}\rightarrow \mathbb{R}^{d_{m}}$, where $d$ denotes the input data dimension and $d_{m}$ denotes the output classes, for any train or test data point $x\in\mathbb{R}^{d}$, $i^*:= \max_{i \in [d_{m}]} f^{1:m}_i(x)$ gives the predicted output label on $x$. It was observed by previous works (see, e.g., \cite{CROWN:18}) that an adversarial attack problem can be formulated as below:
\begin{eqnarray}
\min_{j\in [d_{m}]} \min_{\delta\in B} f^{1:m}_{i^*}(x + \delta) -  f^{1:m}_{j}(x+\delta)
\end{eqnarray}
where $\delta \in\mathbb{R}^d$ is a perturbation variable and $B\subset\mathbb{R}^d$ is a constraint set. 
If the exact solution to the above minimization problem can be found and the minimal value is positive, we know that the model is robust to any perturbation around $x$ within $B$.

In this paper, we focus on the most commonly used $\ell_{\infty}$-attack, where $B:=\{\delta | \|\delta\|_{\infty}\le \epsilon\}$ for some radius $\epsilon$. 
To understand the robustness of a trained neural network, one would ideally gauge the largest $\epsilon$ for it to be robust at a test point. However, for any fixed $\epsilon$, exactly solving the minimization problem is hard. For ReLU networks, it has been recently established that, as a special case of the property-testing problem, the adversarial attack problem is NP-complete in general \cite{reluplex}.  

 All attack algorithms \cite{fsgm, cw, l-bfs} effectively establish upper bounds on the attack radius $\epsilon$ of a given example: If an adversarial example generated by an attack algorithm flips the label of prediction, then its attack radius is an upper bound. But a failed attack at radius $\epsilon$ does not guarantee that the model is safe at attack radius $\epsilon$. There is a growing body of research dedicated to finding a lower bound to the optimal $\epsilon$, by solving a relaxed version of the minimization problem \cite{gowal:19, wong_zico:18, raghunathan:18, WengTowardsFC:18, wong:scalable:18,  singh:convex_barrier:19, Huang_etal:19, single-neuron-relax:20, Zhang2020TowardsSA}.

However, the gap between the true minimum and a lower bound computed from any of the methods above are currently less understood. In this paper, we analyze the tightness of approximation of a family of linear outer approximation based certificate methods.

\subsection{Related Works} 
%
%
%
In this section, we review recent methods that try to establish a lower bound on the attack radius $\epsilon$, which guarantees that no attack vector $\delta$ can be successful at attacking the model within $\epsilon$-distance. 
\paragraph{Integer program (exact solver)}
In the safety verification community, recent works formulated the input-perturbed network property verification problem, which includes adversarial robustness as a special case, as integer program and developed exact solvers based on the formulation \cite{reluplex, Katz_2017, Tjeng2017VerifyingNN}. However, these solvers are intractable beyond toy networks; it was proved in \cite{reluplex} that the property verification problem of ReLU network is NP-complete.

\paragraph{Linear and SDP relaxations}
For ReLU networks, \cite{WengTowardsFC:18} proposes to use a pair of linear functions to lower and upper bound the network output at each point layer-wise, and then the attack problem at the final output layer can be formulated as a linear program and solved exactly. A very similar algorithm is developed by \cite{wong_zico:18} from the dual LP perspective. 
Follow-up works \cite{Zhang2020TowardsSA, Boopathy:19, CROWN:18, wong:scalable:18} use similar approaches to extend the applicability of the approach to activations beyond ReLU function and to more general networks.
The benefit of the linear relaxation approach is that besides fully-connected layers, the lower bound calculation for linear operators such as pooling and convolution can be easily implemented with built-in functions from standard deep learning libraries.
Another line of work \cite{raghunathan:18} uses an exact quadratic formulation of the ReLU adversarial attack problem, which can be approximately solved by SDP (semi-definite program) relaxation.

\paragraph{Interval bound propagation (IBP)}
A more scalable but coarse relaxation is the ``interval bound propagation'' (IBP) \cite{gowal:19, Huang_etal:19, Zhang2020TowardsSA}, where the layer-wise output range of a feed-forward neural network is outer approximated by product of intervals. The IBP bound trades off tightness of certificate with scalability as compared to the other convex approximation bounds.

\paragraph{Tightness analysis and enhancement of relaxation}
Empirically, \cite{salman:19} observed that ``convexification'' based certification methods, including LP-based methods, may suffer from a relaxation barrier in the sense that even solving the exact convex relaxation problem do not significantly close the gap between the work of \cite{wong_zico:18}, an approximate solution to the convex/linear formulation, and the exact bound.
Since the convex relaxation framework in \cite{salman:19} relies on single-neuron relaxation, \cite{single-neuron-relax:20, singh:convex_barrier:19} proposed to close the gap between the relaxed and the exact solution using a multi-neuron joint relaxation. 
On the other hand, theoretically, it was shown that the quadratic (and exact) formulation of ReLU attack with SDP relaxation in \cite{raghunathan:18}, which is not included in the family of methods studied in \cite{salman:19}, is tight on a single-layer network \cite{tightness-sdp:20}, where the author argues that SDP-relaxation is likely non-tight for a multi-layer network without further analysis.

\paragraph{Our contribution}
In this paper, we study the tightness of linear outer approximation methods, which belongs to the family of convex relaxation methods studied in \cite{salman:19}. Here, we focus on the theoretical analysis of the gap between the linear approximated lower bounds and that of the exact bound; in particular, we analyze the effect of network weights, network dimensions, number of layers, and approximation parameters on the tightness of linear approximation.
%
\begin{table}[t]
\caption{Examples of neural network layers that Algorithm \ref{algo:ibp-lin} is applicable}
\label{tab:applicable_layers}
\centering
\resizebox{\columnwidth}{!}{
\begin{tabular}{cccccc}
\toprule
\multicolumn{1}{c}{Neural function types} 
& \multicolumn{1}{c}{Examples} \\
\midrule
General & {convolution} \\
affine functions & {average pooling} \\
& {fully-connected layers} \\
\midrule
Element-wise & {ReLU}\\
non-decreasing & \\
activation function & {sigmoid (most activation functions)}\\
& {max pooling} \\
\bottomrule
\end{tabular}
}
\end{table}
\subsection{Notation}
\paragraph{General feed-forward networks and sub-networks:} In this paper, we restrict our attention to a general $m$-layer feed-forward neural network: For any $\ell \in \{1, \dots, m-1\}$, its $\ell$-th layer is a composite function $f^{\ell}:  \mathbb{R}^{d_{\ell-1}} \rightarrow \mathbb{R}^{d_{\ell}}$, such that for any input $x \in\mathbb{R}^{d_{\ell-1}}$,
\[
f^{\ell} (x) = g^{\ell} (W^{\ell}x + b^{\ell})
\]
where $g^{\ell}$ is an element-wise non-decreasing activation function and $(W^{\ell}, b^{\ell})$ is a matrix-vector pair that represents a general affine function. Table~\ref{tab:applicable_layers} provides a summary of general affine functions and element-wise non-decreasing activation functions. We will use $f^{\ell-k:\ell}$ to denote the sub-network from layer $\ell-k$ to $\ell$, for any $k \in \{0, 1, \dots, \ell-1\}$, i.e.,
\[
f^{\ell-k:\ell}(x) = f^{\ell} \circ f^{\ell-1} \circ \dots \circ f^{\ell-k}(x),
~~~\mbox{for any~} x\in \mathbb{R}^{d_{\ell-k-1}}
\]
To make the notation consistent, we let $f^{\ell:\ell}(x) = f^{\ell}(x)$.
\paragraph{Last-layer modified network}
Given a network $f^{1:m}$, and any $\ell \in \{1,\dots, m\}$ and $k \in \{0,\dots,\ell-1\}$, we use $\tilde{f}^{\ell-k:\ell}$ to denote the last-layer modified version of the sub-network $f^{\ell-k:\ell}$ as such: For $s \in \{\ell-k, \dots, \ell-1\}$, $\tilde{f}^{s}:=f^{s}$, and at layer $\ell$, $\tilde{f}^{\ell}(x) = W^{\ell}x + b^{\ell}$ (that is, activation at last layer $\ell$ is dropped).
In addition, if $\ell=m$, then the weight matrix $W^m$ is modified based on a test point $x_0\in\mathbb{R}^{d}$: Let $i^*:=\arg\max_i f_i^{1:m}(x_0)$. Then $\tilde{f}^m(x)=\tilde{W}^m x + \tilde{b}^m$, where
\[
\tilde{W}^m_i = W^m_{i^*} - W^m_{i}, \mbox{~for all~} i \ne i^*, i \in [d_m]
\]
\[
\tilde{b}^m_i = b^m_{i^*} - b^m_i, \mbox{~for all~} i \ne i^*, i \in [d_m]
\]
\paragraph{$\ell_{\infty}$-norm attack score}
The $\ell_{\infty}$-norm attack problem for a fixed input $x_0$ can be directly formulated as
\begin{equation}
\label{eqn:l_infty_attack}
s:=\min_j \min_{\|\delta\|_{\infty} \le \epsilon} \tilde{f}_j^{1:m} (x_0 + \delta)
\end{equation}
We refer to $s$ as the optimal attack score. The objective of most robustness certification methods is to provide an estimate $\hat{s}$, that lower bounds the optimal score $s$. If $\hat{s}$ is positive, the model is $\epsilon$-robust at $x_0$ and this particular $\epsilon$ is a lower bound to the ``optimal'' attack radius. 

\paragraph{General notation} 
We use $W_i$ to denote the $i$-th row of matrix $W$ and $W_{\star, i}$ to denote its $i$-th column; we use $b_i$ to denote the $i$-th element of vector $b$. We use superscript $\ell$ to denote network parameters or functions at the $\ell$-th layer, e.g., $(g^{\ell}, W^{\ell}, b^{\ell})$ and $f^{\ell}$; we use superscript $\ell-k:\ell$ to denote parameters or functions from $\ell-k$ to $\ell$-th layer, e.g., $f^{\ell-k:\ell}$ denotes the part of a network starting from $\ell-k$-th layer and ending at $\ell$-th layer.
For two vectors $x, y$, we use $x\succeq y$ to denote element-wise ``greater than''; ``$\preceq$'' is defined similarly.
We use $[\cdot]_{+}$ to denote the element-wise $\max(\cdot, 0)$ operation (i.e., ReLU), and we use $[\cdot]_{-}$ to denote the element-wise $\min(\cdot, 0)$ operation. We use $|\cdot|$ to denote the element-wise absolute value operation.
We use $\|\cdot\|_2$ to denote vector $\ell_2$-norm and $\|\cdot\|_{F}$ to denote matrix Frobenius norm.
We use $[n]$ to denote the set of integers from 1 up to $n$.
\section{IBP-Lin: A meta-algorithm for analyzing linear robustness certificates}
In this section, we introduce IBP-Lin, a meta-algorithm that interpolates between IBP \cite{gowal:19} and linear outer approximation based methods \cite{Zhang2020TowardsSA, CROWN:18, WengTowardsFC:18}. We start by reviewing IBP, an empirically efficient robustness certificate method: Since exactly solving the optimization problem in Eq~\eqref{eqn:l_infty_attack} is difficult, a layer-wise constraint set propagation approach is adopted by IBP, where the constraint set is fixed to be a hyper-rectangle.
\begin{defn}[Hyper-rectangle]
\label{defn:hyperrectangle}
Let $\alpha_i \le \beta_i$, for all $i \in [d]$. A hyper-rectangle in $\mathbb{R}^d$ is defined by the Cartesian product of intervals, i.e., $\Pi_{i=1}^d [\alpha_i, \beta_i]$.
\end{defn}
At the $\ell$-th layer of a feed-forward network, input from $\ell-1$-th layer can be outer-approximated by a hyper-rectangular constraint, represented by the upper and lower limits of its axes $(\alpha^{\ell}, \beta^{\ell})$, where $\alpha^{\ell}, \beta^{\ell} \in \mathbb{R}^{d_{\ell}}$.

IBP recursively applies two steps: At layer $\ell$, given a hyper-rectangular bound $(\alpha^{\ell-1}, \beta^{\ell-1})$ from previous layer, it computes a pre-activation IBP bound $(\hat\alpha^{\ell}, \hat\beta^{\ell})$ that represents a hyper-rectangle containing the pre-activation polytope (induced by an affine transformation of the input hyper-rectangle: $W^{\ell}x+b^{\ell}$), where
\begin{eqnarray}
\label{eqn:ibp_lo}
\hat\alpha_i^{\ell}:=\min_{\alpha^{\ell-1} \preceq x \preceq \beta^{\ell-1}} W_i^{\ell} x + b_i^{\ell}
=
b_i^{\ell} + W_i^{\ell} \beta^{\ell-1} + [W_i^{\ell}]_{+} (\alpha^{\ell-1} - \beta^{\ell-1})
\end{eqnarray}
and
\begin{eqnarray}
\label{eqn:ibp_hi}
\hat\beta_i^{\ell}:=\max_{\alpha^{\ell-1} \preceq x \preceq \beta^{\ell-1}} W_i^{\ell} x + b_i^{\ell}
=
b_i^{\ell} + W_i^{\ell} \beta^{\ell-1}  + [-W_i^{\ell}]_{+} (\beta^{\ell-1} - \alpha^{\ell-1})
\end{eqnarray}
Then it applies the element-wise activation function to the pre-activation bound to obtain the IBP bound at layer $\ell$:
\begin{equation}
    \label{eqn:ibp_action}
    \alpha^{\ell}:=g^{\ell}(\hat\alpha^{\ell})
    ~~\mbox{and}~~
    \beta^{\ell}:=g^{\ell}(\hat\beta^{\ell})
\end{equation}
By the assumption that $g^{\ell}$ is non-decreasing, the hyper-rectangle $(\alpha^{\ell}, \beta^{\ell})$ is guaranteed to contain the output of network at the $\ell$-th layer. 
Exact solutions to Eq~\eqref{eqn:ibp_lo} and \eqref{eqn:ibp_hi} have different forms in the literature. In Appendix \ref{appdendix:ibp_derivation}, we show that our form of pre-activation IBP bound is equivalent to the original IBP bound in \cite{gowal:19}.
\subsection{From IBP to IBP-Lin}
One way to view linear outer approximation methods \cite{Zhang2020TowardsSA, CROWN:18, WengTowardsFC:18} is that, we may tighten the IBP hyper-rectangular bound at layer $\ell$ with an alternative (or additional) hyper-rectangular constraint, $(\alpha^{\prime}, \beta^{\prime})$. 
\begin{eqnarray}
\label{dual_bound_tight}
\min_{x} W_i^{\ell}x + b_i^{\ell} \nonumber\\
\mbox{s.t.~~}\alpha^{\ell-1} \preceq x \preceq \beta^{\ell-1}
\mbox{~~and~~}
\alpha^{\prime} \preceq x \preceq \beta^{\prime}
\end{eqnarray}
The enhanced constraint is equivalent to
$$
\max(\alpha^{\ell-1}, \alpha^{\prime}) \preceq x \preceq \min(\beta^{\ell-1}, \beta^{\prime})
$$
Existing linear outer approximation techniques \cite{Zhang2020TowardsSA, CROWN:18, WengTowardsFC:18} follow a similar template to systematically derive an alternative hyper-rectangular constraint $(\alpha^{\prime}, \beta^{\prime})$, built on top of a layer-wise single-layer linear outer approximation, which is summarized in Definition~\ref{defn:single-layer-relax}. 
\begin{defn}[Single-layer linear approximation of neural functions]
\label{defn:single-layer-relax}
Let $f^{\ell}(x)$ be the $\ell$-th layer function in a feed-forward network. 
Let $(\hat\alpha^{\ell}, \hat\beta^{\ell})$ be any two vectors such that any input $x$ to $f^{\ell}$ satisfies the following
\begin{equation}
\label{eqn:pre-act-crit}
\hat\alpha^{\ell} \preceq \hat{z}^{\ell}(x):=W^{\ell}x+b^{\ell} \preceq \hat\beta^{\ell}
\end{equation}
We say $(\hat\alpha^{\ell}, \hat\beta^{\ell})$ is the \textbf{pre-activation lower/upper bound} at layer $\ell$. 
In addition, we say $f^{\ell}(x)$ has a \textbf{linear outer approximation}, if there exists two affine functions $T_L^{\ell}$ and $T_U^{\ell}$ of form
$$
T_L^{\ell}(x) := D_L^{\ell} \hat{z}^{\ell}(x) + b_L^{\ell}
\mbox{~~and~~}
T_U^{\ell}(x) := D_U^{\ell}\hat{z}^{\ell}(x) + b_U^{\ell}
$$
with the following holds
$$
\forall x, ~~s.t.~~ \hat\alpha^{\ell} \preceq \hat{z}^{\ell}(x) \preceq \hat\beta^{\ell},~
T_L^{\ell}(x) \preceq f^{\ell}(x) \preceq T_U^{\ell}(x)
$$
where $D_L^{\ell}, D_U^{\ell}$ are diagonal matrices and $b_L^{\ell}, b_U^{\ell}$ are vectors. 
\end{defn}
The pre-activation bounds $(\hat\alpha^{\ell}, \hat\beta^{\ell})$, the basis for constructing a single-layer linear outer approximation, can be any pair of bounds that guarantees Eq~\eqref{eqn:pre-act-crit}. For example, they can be IBP pre-activation bounds or pre-computed linear outer approximation bounds.
Based on the single-layer relaxation, a multi-layer linear outer approximation is defined as below.
%
\begin{defn}[Multi-layer linear approximation operator for sub-network $\tilde{f}^{\ell-k:\ell}$]
\label{defn:multi-layer-lin-appx}
For any $\ell$ and $k < \ell$. If a network admits single-layer linear approximation from layer $\ell-k$ to layer $\ell-1$. Let $\tilde{f}^{\ell-k:\ell}$ be a last-layer modified version of $f^{\ell-k:\ell}$.
We define a $k$-layer linear outer approximation operator to $\tilde{f}^{\ell-k:\ell}(x)$ as
$$
T^{\ell-k:\ell}_L(x) := W_L^{\ell-k:\ell}(x) + b_L^{\ell-k:\ell}
$$
$$
T^{\ell-k:\ell}_U(x) := W_U^{\ell-k:\ell}(x) + b_U^{\ell-k:\ell}
$$
where
$$
W_L^{\ell:\ell} = W_U^{\ell:\ell} := W^{\ell} \, ,
\mbox{~~}
b_L^{\ell:\ell} = b_U^{\ell:\ell} := b^{\ell}
$$
And for $s = 1, \dots, k-1,$
$$
W_L^{\ell-s:\ell}:= P_L^{\ell-s+1}W^{\ell-s}
\mbox{~and~}
W_U^{\ell-s:\ell}:= P_U^{\ell-s+1}W^{\ell-s}
$$
with
$$
P_L^{\ell-s+1}:=W_L^{\ell-s+1:\ell}D_U^{\ell-s} + [W_L^{\ell-s+1:\ell}]_{+}(D_L^{\ell-s} - D_U^{\ell-s})
$$
$$
P_U^{\ell-s+1}:=W_U^{\ell-s+1:\ell}D_U^{\ell-s} + [-W_U^{\ell-s+1:\ell}]_{+}(D_U^{\ell-s} - D_L^{\ell-s})
$$
and
\begin{eqnarray*}
b_L^{\ell-s:\ell}:=P_L^{\ell-s+1}b^{\ell-s} 
+ [W_L^{\ell-s+1:\ell}]_{-}b_U^{\ell-s} 
+ [W_L^{\ell-s+1:\ell}]_{+}b_L^{\ell-s} + b_L^{\ell-s+1:\ell}
\end{eqnarray*}
\begin{eqnarray*}
b_U^{\ell-s:\ell}:=P_U^{\ell-s+1}b^{\ell-s} 
+[W_U^{\ell-s+1:\ell}]_{+}b_U^{\ell-s} 
+[W_U^{\ell-s+1:\ell}]_{-}b_L^{\ell-s} + b_U^{\ell-s+1:\ell}
\end{eqnarray*}
\end{defn}
LinApprox (Algorithm \ref{algo:lin-appx}) is a general scheme for linear outer approximation of networks (sub-networks) using Definition~\ref{defn:multi-layer-lin-appx}.
Its input constraint $(\alpha^{\ell-k-1}, \beta^{\ell-k-1})$ specifies the input hyper-rectangle to sub-network $\tilde{f}^{\ell-k:\ell}$; 
the pre-activation bounds $(\widehat\alpha^s, \widehat\beta^s)_{s=\ell-k}^{\ell}$ are pre-computed to ensure for any $\alpha^{\ell-k-1}\preceq x\preceq \beta^{\ell-k-1}$, at layer $s \in \{\ell-k, \dots, \ell\}$, 
\[
\widehat\alpha^s\preceq\tilde{f}^{\ell-k:s}(x)\preceq \widehat\beta^s.
\]
The flexibility of LinApprox comes from the choice of pre-activation bounds $(\widehat\alpha^s, \widehat\beta^s)_{s=\ell-k}^{\ell}$ and the specific single-layer linear approximation method.

The multi-layer linear-outer approximation (LinApprox) augmented bound propagation algorithm, IBP-Lin, is presented in Algorithm \ref{algo:ibp-lin}. Corollary~\ref{coro:correctness} shows that IBP-Lin (Algorithm \ref{algo:ibp-lin}) returns an element-wise lower bound vector $\alpha^m$ to $\min_{\alpha^0\preceq x\preceq \beta^0}\tilde{f}^{1:m}(x)_i$, for all $i\in[d_{m}]$.
\begin{algorithm}[t]
\caption{IBP-Lin}
\label{algo:ibp-lin}
    \SetKwInOut{Input}{Input}
    \SetKwInOut{Output}{Output}
   \Input{
   $\tilde{f}^{1:m}$;
   $\epsilon > 0$, $x_0 \in \mathbb{R}^{d}$; linear approximation schedule vector $s \in \mathbb{N}^m$
   }
    \Output{Robustness lower bound vector $\alpha^m$}
    \Init{}{
    	$\alpha^0 \leftarrow x_0 - \epsilon \mathbbm{1}$ \\
	$\beta^0 \leftarrow x_0 + \epsilon \mathbbm{1}$
    }
   \For{$\ell \in 1, \dots, m$}{
	$\widehat{\alpha}^{\ell} \leftarrow W^{\ell}\beta^{\ell-1} + b^{\ell} + [W^{\ell}]_{+} (\alpha^{\ell-1} - \beta^{\ell-1})$ \\
	$\widehat{\beta}^{\ell} \leftarrow W^{\ell}\beta^{\ell-1} + b^{\ell} + [-W^{\ell}]_{+} (\beta^{\ell-1} - \alpha^{\ell-1})$ \\
	\If {$s_{\ell} \ne 0$}{
		Get number of backward approximation layers $k \leftarrow s_{\ell}$  \\
		Get sub-network function $\tilde f^{\ell-k+1:\ell}(x) := W^{\ell}f^{\ell-k+1:\ell-1}(x) + b^{\ell}$
		$(\widehat\alpha^{\prime}, \widehat\beta^{\prime})\leftarrow LinApprox((\alpha^{\ell-k}, \beta^{\ell-k}), \tilde{f}^{\ell-k+1:\ell}, (\hat\alpha^s,\hat\beta^s)_{s=\ell-k}^{\ell})$;\\
		(Re-estimate pre-activation bounds at layer $\ell$ by \textbf{Algorithm~\ref{algo:lin-appx}}) \\
		$\widehat{\alpha}^{\ell} \leftarrow \max(\widehat{\alpha}^{\ell}, \widehat\alpha^{\prime})$; \\
		$\widehat{\beta}^{\ell} \leftarrow \min(\widehat{\beta}^{\ell}, \widehat\beta^{\prime})$; \\
	}
	$\alpha^{\ell} \leftarrow g^{\ell}(\widehat{\alpha}^{\ell})$ 
		and $\beta^{\ell} \leftarrow g^{\ell}(\widehat{\beta}^{\ell})$; \\
   }
   \KwRet\ $\alpha^m$;
\end{algorithm}
\begin{algorithm}[t]
\caption{LinApprox}
\label{algo:lin-appx}
    \SetKwInOut{Input}{Input}
    \SetKwInOut{Output}{Output}
   \Input{
   	Initial constraints $(\alpha^{\ell-k-1}, \beta^{\ell-k-1})$; 
	Sub-network function $\tilde{f}^{\ell-k:\ell}$ with pre-activation bounds $(\hat\alpha^s, \hat\beta^s)_{s=\ell-k}^{\ell}$;
	Single-layer linear approximation method
   }
    \Output{$\hat\alpha^{\prime}, \hat\beta^{\prime}$}
    \Init{}{
    \begin{small}
    	$W_L^{\ell:\ell}\leftarrow W^\ell$ and $W_U^{\ell:\ell} \leftarrow W^{\ell}$; ~~ 
	$b_L^{\ell:\ell}\leftarrow b^{\ell}$ and $b_U^{\ell:\ell}\leftarrow b^{\ell}$; \\
    \end{small}
    }
   \For{$s \in \{1, \dots, k\}$}{
   \begin{small}
   Construct $D_L^{\ell-s}, D_U^{\ell-s}, b_L^{\ell-s}, b_U^{\ell-s}$ with the provided single-layer linear approximation method (Definition~\ref{defn:single-layer-relax}) based on pre-activation bounds $(\hat\alpha^{\ell-s},\hat\beta^{\ell-s})$ ; \\
   
   Update $W_L^{\ell-s:\ell}, W_U^{\ell-s:\ell}, b_L^{\ell-s:\ell}, b_U^{\ell-s:\ell}$ based on the iterative rule in Definition~\ref{defn:multi-layer-lin-appx} ;
	\end{small}
   }
   \begin{small}
   $\hat\alpha^{\prime} \leftarrow W_L^{\ell-k:\ell}\beta^{\ell-k-1} 
   + [W_L^{\ell-k:\ell}]_{+} (\alpha^{\ell-k-1}-\beta^{\ell-k-1}) + b_L^{\ell-k:\ell}$ ; \\
   $\hat\beta^{\prime} \leftarrow W_U^{\ell-k:\ell}\beta^{\ell-k-1} 
   + [-W_U^{\ell-k:\ell}]_{+} (\beta^{\ell-k-1}-\alpha^{\ell-k-1}) + b_U^{\ell-k:\ell}$ ; \\
   \end{small}
   \KwRet $(\hat\alpha^{\prime}, \hat\beta^{\prime})$\;
\end{algorithm}
\begin{restatable}[Correctness of IBP-Lin]{corollary}{correctness}
\label{coro:correctness}
Let $f^{1:m}$ be $m$-layer feed-forward network. Suppose there exists single-layer linear outer approximation for every layer of the network as defined in Definition~\ref{defn:single-layer-relax}.
Suppose the input to the network $\tilde{f}^{1:m}$ satisfies
\[
\alpha^{0} \preceq x \preceq \beta^{0}
\]
Let $\widehat\alpha^{\ell}, \widehat\beta^{\ell}$ be defined as in line 5 and 6 of Algorithm \ref{algo:ibp-lin}, and let $\alpha^m$ be as defined in line 16 of Algorithm \ref{algo:ibp-lin}.
\begin{enumerate}
    \item For any $\ell\in [m]$, and for any $x$ such that $\alpha^{0} \preceq x \preceq \beta^{0}$,
    \[
    \widehat\alpha^{\ell}\preceq \tilde{f}^{1:\ell}(x) \preceq \widehat\beta^{\ell}
    \]
    \item
    For any $x$ such that $\alpha^{0} \preceq x \preceq \beta^{0}$,
    \[
      \alpha^m  \preceq \tilde{f}^{1:m}(x)
    \]
    
\end{enumerate}
\end{restatable}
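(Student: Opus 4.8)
The plan is to prove both conclusions simultaneously by induction on the layer index $\ell$, carrying a strengthened invariant: for every $x$ with $\alpha^0 \preceq x \preceq \beta^0$, both $\widehat\alpha^\ell \preceq \tilde f^{1:\ell}(x) \preceq \widehat\beta^\ell$ (this is conclusion 1) and $\alpha^\ell \preceq f^{1:\ell}(x) \preceq \beta^\ell$ hold, where $\tilde f^{1:\ell}$ denotes the pre-activation map $x \mapsto W^\ell f^{1:\ell-1}(x) + b^\ell$ and $f^{1:\ell} = g^\ell \circ \tilde f^{1:\ell}$. Since the last layer of $\tilde f^{1:m}$ carries no activation, the two halves of the invariant coincide at $\ell = m$ and the first half is exactly conclusion 2. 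The base case $\ell = 0$ is the hypothesis on $(\alpha^0,\beta^0)$.

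The inductive step rests on two ingredients. The first is the elementary box-arithmetic identity $\min_{\alpha \preceq y \preceq \beta} W_i y + b_i = b_i + W_i\beta + [W_i]_{+}(\alpha - \beta)$, with the dual one for the maximum (take $y_j = \alpha_j$ when $W_{ij} > 0$ and $y_j = \beta_j$ otherwise); granted $f^{1:\ell-1}(x) \in [\alpha^{\ell-1},\beta^{\ell-1}]$ from the inductive hypothesis, it shows that lines 5 and 6 of Algorithm~\ref{algo:ibp-lin} return a correct enclosure $[\widehat\alpha^\ell,\widehat\beta^\ell] \ni \tilde f^{1:\ell}(x)$, which already disposes of the case $s_\ell = 0$ and of the un-refined part of the refined case. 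The second ingredient is correctness of LinApprox (Algorithm~\ref{algo:lin-appx}): by an inner induction on sub-network depth one shows that the operators of Definition~\ref{defn:multi-layer-lin-appx} satisfy $T_L^{\ell-k:\ell}(x) \preceq \tilde f^{\ell-k:\ell}(x) \preceq T_U^{\ell-k:\ell}(x)$ for every $x$ whose intermediate pre-activations lie within the supplied bounds $(\widehat\alpha^s,\widehat\beta^s)$. The step of that inner induction peels off the bottom layer: from accumulated linear bounds $W_L z + b_L \preceq \tilde f^{s+1:\ell}(z) \preceq W_U z + b_U$ evaluated at $z = f^s(y)$ and the single-layer sandwich $D_L^s \widehat{z}^s(y) + b_L^s \preceq f^s(y) \preceq D_U^s \widehat{z}^s(y) + b_U^s$ of Definition~\ref{defn:single-layer-relax}, split $W_L = [W_L]_{+} + [W_L]_{-}$ and use that a nonnegative matrix preserves $\preceq$ while a nonpositive one reverses it; substituting $\widehat{z}^s(y) = W^s y + b^s$ and regrouping reproduces verbatim the recursions for $P_L^{s+1}, W_L^{s:\ell}, b_L^{s:\ell}$ (and dually for the upper operators), the base case being the identity $\tilde f^{\ell:\ell}(z) = W^\ell z + b^\ell$. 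Feeding these linear bounds into the box-arithmetic identity over the sub-network's input box then shows the returned $(\widehat\alpha',\widehat\beta')$ encloses $\tilde f^{\ell-k:\ell}$ on that box.

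Assembling the inductive step is then quick. If $s_\ell \neq 0$, the inductive hypothesis gives $f^{1:\ell-k}(x) \in [\alpha^{\ell-k},\beta^{\ell-k}]$, so LinApprox yields $\widehat\alpha' \preceq \tilde f^{\ell-k+1:\ell}\!\big(f^{1:\ell-k}(x)\big) = \tilde f^{1:\ell}(x) \preceq \widehat\beta'$; since $\widehat\alpha^\ell$ and $\widehat\alpha'$ are both lower bounds on $\tilde f^{1:\ell}(x)$, so is their coordinatewise maximum, and dually for the minimum of the two upper bounds, which is exactly the update $\widehat\alpha^\ell \leftarrow \max(\widehat\alpha^\ell,\widehat\alpha')$, $\widehat\beta^\ell \leftarrow \min(\widehat\beta^\ell,\widehat\beta')$ in Algorithm~\ref{algo:ibp-lin}. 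This establishes conclusion 1 at layer $\ell$, and monotonicity of $g^\ell$ upgrades it to $\alpha^\ell = g^\ell(\widehat\alpha^\ell) \preceq g^\ell(\tilde f^{1:\ell}(x)) = f^{1:\ell}(x) \preceq g^\ell(\widehat\beta^\ell) = \beta^\ell$, closing the induction; at $\ell = m$ it reads $\alpha^m = \widehat\alpha^m \preceq \tilde f^{1:m}(x)$, which is conclusion 2.

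The main obstacle is not any individual inequality but the bookkeeping that secures the hypothesis of the inner induction above, namely that the pre-activation bounds $(\widehat\alpha^s,\widehat\beta^s)$ passed to LinApprox at layer $\ell$ are valid for \emph{every} point of the sub-network's input box $[\alpha^{\ell-k},\beta^{\ell-k}]$ and not merely along the set reachable from $[\alpha^0,\beta^0]$ --- a refined pre-activation bound is only guaranteed on the smaller, reachable domain, so Definition~\ref{defn:single-layer-relax}'s precondition can fail at the intermediate layers if one is not careful. Handling this cleanly calls for a further strengthening of the invariant --- roughly, that for every contiguous block of layers the computed boxes $(\alpha^s,\beta^s)$ and $(\widehat\alpha^s,\widehat\beta^s)$ are mutually consistent under composition of that block --- together with a mild well-formedness assumption on the schedule $s$ (so that the ``reach'' of a later LinApprox call does not extend past that of an earlier one). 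I would isolate this as a preliminary lemma; the remainder, in particular matching the sign split to the $P_L/P_U$ recursions of Definition~\ref{defn:multi-layer-lin-appx}, is routine if error-prone.
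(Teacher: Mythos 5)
Your proposal is correct and follows essentially the same route as the paper: an outer induction on the layer index establishing the pre-activation enclosure (the IBP case is the paper's Lemma~\ref{lm:correctness_ibp}, the intersection step is handled identically), together with an inner induction on the unrolling depth that reproduces the recursions of Definition~\ref{defn:multi-layer-lin-appx} exactly as in the paper's Proposition~\ref{prop:correctness}. The bookkeeping worry you raise at the end is resolved the way you yourself hint: the paper states the inner induction only along reachable points, i.e.\ as $T_L^{\ell-k:\ell}(f^{1:\ell-k-1}(x)) \preceq \tilde f^{1:\ell}(x) \preceq T_U^{\ell-k:\ell}(f^{1:\ell-k-1}(x))$, whose intermediate pre-activations are guaranteed to lie in the supplied boxes by the outer invariant, and the final minimization over the full input box only loosens the bound --- so no extra well-formedness assumption on the schedule is needed.
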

The proof of Corollary \ref{coro:correctness} relies on Proposition ~\ref{prop:correctness} (see Appendix), whose main idea is the same as the correctness proof of CROWN \cite[Theorem 3.2]{CROWN:18}. 
\paragraph{Schedule vector in IBP-Lin (Algorithm~\ref{algo:ibp-lin}):}
Given a trained network $\tilde{f}^{1:m}$ (with last-layer modified according to), fix any input data $x_0$, and attack radius $\epsilon > 0$, IBP-Lin is parametrized by the linear approximation schedule vector $s\in \mathbb{N}^m$. The value $s_{\ell}$ specifies the number of backward unrolling used to linearly approximate the network output at layer $\ell$: If $s_{\ell}=0$, no linear approximation is used; if $s_{\ell}=k, k\in \{1,\dots, \ell\}$, then the sub-network $\tilde{f}^{\ell-k:\ell}$ is linearly approximated.

\paragraph{Relation to existing algorithms}
IBP-Lin reduces to IBP \cite{gowal:19} if the schedule vector $s$ is null (all-zero); it reduces to CROWN-IBP style algorithm \cite{Zhang2020TowardsSA} if $s_{\ell}=0, \forall \ell < m$, and $s_m=m$; it reduces to CROWN and Fast-Lin style algorithm \cite{CROWN:18, WengTowardsFC:18} if $s_{\ell}=\ell, \forall \ell \in [m]$.

There are some slight generalization and improvement in IBP-Lin: First, IBP-Lin allows more freedom in choosing the single-layer approximation functions; second, while existing linear approximation methods unfold the network from layer $\ell$ to the first layer, IBP-Lin leaves the number of backward unfolding layers as a parameter (specified by the schedule $s_{\ell}$); lastly, after obtaining the alternative pre-activation bounds $(\hat\alpha^{\prime}, \hat\beta^{\prime})$ from LinApprox, it takes its intersection with the original pre-activation bounds (line 11 and 12), this is to make sure that we never get a worse approximation by the additional computation in LinApprox, as is clear from Eq.~\eqref{dual_bound_tight}.

\section{Main result}
While IBP-Lin allows different choices of linear approximation, intuitively the tightness of a linear outer-approximation will be determined by the ``width'' of the approximation tube around the boundary points. The ``thinner'' the approximation tube is, the better the approximation result should be. 
Before proceeding, we characterize one common single-layer linear outer approximation method as the parallelogram relaxation.
\begin{defn}[Parallelogram relaxation]
\label{defn:parallel-relax}
We say a neural network layer $f^{\ell}(x)$ admits a parallelogram relaxation, if it has a single-layer linear outer approximation (Definition~\ref{defn:single-layer-relax}) with
\[
D:=D^{\ell}_L = D^{\ell}_U 
\]
such that for some pre-activation bounds $(\hat\alpha^{\ell}, \hat\beta^{\ell})$,
$$
\forall x~~s.t.~~ \hat\alpha^{\ell} \preceq \hat{z}(x)=W^{\ell}x+b^{\ell} \preceq \hat\beta^{\ell},~
D\hat{z}(x)+b_L \preceq f^{\ell}(x) \preceq D\hat{z}(x)+b_U
$$
\end{defn}
The parallelogram relaxation is adopted by IBP and Fast-Lin \cite{gowal:19, WengTowardsFC:18}. In \cite{CROWN:18} and \cite{Zhang2020TowardsSA}, another type of of relaxation method is used to minimize the volume of the relaxed area. In our subsequent analysis, we focus on parallelogram relaxation, which captures our intuition that the ``parallelogram width'' of the relaxed region controls the quality of approximation.
Proposition~\ref{thm:parallel-relax-error} uses the layer-wise ``parallelogram width'', $b_U^{\ell} - b_L^{\ell}$, as a parameter to upper bound the gap between the relaxed hyper-rectangular approximation of network output and the tightest hyper-rectangle containing the network output.
\begin{restatable}[Approximation error of multi-layer parallelogram-relaxation]{proposition}{parallel}
\label{thm:parallel-relax-error}
\begin{small}
Let $f^{1:m}$ be a neural network classifier that can be layer-wise outer approximated by parallelogram relaxation from $\ell = 1, \dots, m-1$. For any $\ell, 0< k < \ell$, let $\tilde{f}^{\ell-k:\ell}$ denote the last-layer modified version of sub-network $f^{\ell-k:\ell}$. 
Let $T_L^{\ell-k:\ell}, T_U^{\ell-k:\ell}$ be a multi-layer linear outer approximation of $\tilde{f}^{\ell-k:\ell}$ based on parallelogram relaxation. 
%
%
Then for all $i \in [d_{\ell}]$,
\[
\min_{\alpha^{\ell-k-1} \preceq x \preceq \beta^{\ell-k-1}}T_L^{\ell-k:\ell}(x)_i \ge \min_{\alpha^{\ell-k-1} \preceq x \preceq \beta^{\ell-k-1}} \tilde{f}_i^{\ell-k:\ell} (x)
- 
\sum_{s=0}^{k-1} |W_i^{\ell-s:\ell}| (b^{\ell-s-1}_U - b^{\ell-s-1}_L)
\]
\mbox{~~and~~}
\[
\max_{\alpha^{\ell-k-1} \preceq x \preceq \beta^{\ell-k-1}}T_U^{\ell-k:\ell}(x)_i 
\le
\max_{\alpha^{\ell-k-1} \preceq x \preceq \beta^{\ell-k-1}} \tilde{f}_i^{\ell-k:\ell} (x) +
\sum_{s=0}^{k-1} |W_i^{\ell-s:\ell}| (b^{\ell-s-1}_U - b^{\ell-s-1}_L)
\]
\end{small}
\end{restatable}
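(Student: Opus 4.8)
The plan is to exploit the single feature that distinguishes parallelogram relaxation from the general single-layer relaxation of Definition~\ref{defn:single-layer-relax}: the lower and upper slope matrices coincide, $D_L^{\ell-s}=D_U^{\ell-s}=:D^{\ell-s}$. I claim this propagates through the recursion of Definition~\ref{defn:multi-layer-lin-appx}, forcing the multi-layer lower and upper weight matrices to be equal as well, so that $T_U^{\ell-k:\ell}(x)-T_L^{\ell-k:\ell}(x)$ is a vector that does not depend on $x$ --- the accumulated ``parallelogram width'' $b_U^{\ell-k:\ell}-b_L^{\ell-k:\ell}$. Granting this, the proposition is essentially immediate: since $T_L^{\ell-k:\ell}\preceq\tilde f^{\ell-k:\ell}\preceq T_U^{\ell-k:\ell}$ holds pointwise on the input box (the outer-approximation guarantee behind Definition~\ref{defn:multi-layer-lin-appx}; cf.\ Proposition~\ref{prop:correctness} and Corollary~\ref{coro:correctness}), for every admissible $x$ we get $\tilde f_i^{\ell-k:\ell}(x)\le T_U^{\ell-k:\ell}(x)_i=T_L^{\ell-k:\ell}(x)_i+(b_U^{\ell-k:\ell}-b_L^{\ell-k:\ell})_i$, hence $T_L^{\ell-k:\ell}(x)_i\ge\tilde f_i^{\ell-k:\ell}(x)-(b_U^{\ell-k:\ell}-b_L^{\ell-k:\ell})_i$; minimizing over $x$ (the subtracted quantity being constant in $x$) yields the first inequality, and the second follows symmetrically from $T_L^{\ell-k:\ell}(x)\preceq\tilde f^{\ell-k:\ell}(x)$ by maximizing. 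So the real work is two bookkeeping claims about Definition~\ref{defn:multi-layer-lin-appx}.

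First I would prove, by induction on $s=0,1,\dots,k$, that $W_L^{\ell-s:\ell}=W_U^{\ell-s:\ell}$ under parallelogram relaxation; write $W^{\ell-s:\ell}$ for the common value (this is the object that appears, without an $L/U$ subscript, in the proposition). The base case is the initialization $W_L^{\ell:\ell}=W_U^{\ell:\ell}=W^{\ell}$. For the inductive step, substituting $D_L^{\ell-s}=D_U^{\ell-s}=D^{\ell-s}$ into the formulas for $P_L^{\ell-s+1}$ and $P_U^{\ell-s+1}$ kills the $[\,\cdot\,]_{+}(D_L^{\ell-s}-D_U^{\ell-s})$ and $[-\,\cdot\,]_{+}(D_U^{\ell-s}-D_L^{\ell-s})$ correction terms, leaving $P_L^{\ell-s+1}=W_L^{\ell-s+1:\ell}D^{\ell-s}$ and $P_U^{\ell-s+1}=W_U^{\ell-s+1:\ell}D^{\ell-s}$; these are equal by the inductive hypothesis, and since $W_L^{\ell-s:\ell}=P_L^{\ell-s+1}W^{\ell-s}$ and $W_U^{\ell-s:\ell}=P_U^{\ell-s+1}W^{\ell-s}$ the induction closes. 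In particular $P_L^{\ell-s+1}=P_U^{\ell-s+1}$ for all relevant $s$, which is what makes the next step work.

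Next I would track the width $w_s:=b_U^{\ell-s:\ell}-b_L^{\ell-s:\ell}$, noting $w_0=b_U^{\ell:\ell}-b_L^{\ell:\ell}=0$. Subtracting the two bias recursions of Definition~\ref{defn:multi-layer-lin-appx} and using $P_L^{\ell-s+1}=P_U^{\ell-s+1}$ and $W_L^{\ell-s+1:\ell}=W_U^{\ell-s+1:\ell}=W^{\ell-s+1:\ell}$ from the first step, the $P^{\ell-s+1}b^{\ell-s}$ terms cancel, the $b_L^{\ell-s+1:\ell},b_U^{\ell-s+1:\ell}$ terms produce $w_{s-1}$, and the four remaining mixed terms collect as
\[
w_s=\left([W^{\ell-s+1:\ell}]_{+}-[W^{\ell-s+1:\ell}]_{-}\right)(b_U^{\ell-s}-b_L^{\ell-s})+w_{s-1}=|W^{\ell-s+1:\ell}|(b_U^{\ell-s}-b_L^{\ell-s})+w_{s-1},
\]
where the last step is the element-wise identity $[\,\cdot\,]_{+}-[\,\cdot\,]_{-}=|\,\cdot\,|$. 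Telescoping from $s=1$ to $s=k$ and re-indexing gives $w_k=\sum_{s=0}^{k-1}|W^{\ell-s:\ell}|(b_U^{\ell-s-1}-b_L^{\ell-s-1})$; reading off the $i$-th coordinate turns $|W^{\ell-s:\ell}|\,v$ into $|W_i^{\ell-s:\ell}|\,v$, which is exactly the sum appearing in the proposition. Substituting $(w_k)_i$ into the pointwise inequalities from the first paragraph completes the argument.

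The main obstacle is the sign bookkeeping in this second step: in the lower bias recursion $[W^{\ell-s+1:\ell}]_{-}$ multiplies $b_U^{\ell-s}$ and $[W^{\ell-s+1:\ell}]_{+}$ multiplies $b_L^{\ell-s}$, while in the upper recursion these roles are swapped, so the subtraction has to be carried out coordinate by coordinate to confirm that the mixed terms collapse to the clean coefficient $|W^{\ell-s+1:\ell}|$ acting on $b_U^{\ell-s}-b_L^{\ell-s}$ rather than to a sign-dependent combination. It is also worth recording (although not strictly needed for the inequalities themselves) that validity of the single-layer parallelogram relaxation forces $b_U^{\ell-s}\succeq b_L^{\ell-s}$, so $w_k\succeq 0$ and the bound is a genuine one-sided widening rather than a vacuous rearrangement.
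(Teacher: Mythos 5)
Your proposal is correct and follows essentially the same route as the paper's proof: both hinge on showing by induction that parallelogram relaxation forces $W_L^{\ell-s:\ell}=W_U^{\ell-s:\ell}$, so that $T_U^{\ell-k:\ell}-T_L^{\ell-k:\ell}$ is the $x$-independent width $b_U^{\ell-k:\ell}-b_L^{\ell-k:\ell}$, which telescopes via $[\cdot]_{+}-[\cdot]_{-}=|\cdot|$ to exactly the claimed sum. Your final step (minimizing the pointwise inequality $T_L(x)_i\ge\tilde f_i(x)-(w_k)_i$ directly) is a trivially equivalent repackaging of the paper's decomposition into the nonpositive term $A$ and the width-bounded term $B$.
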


For any sub-network $\tilde{f}^{\ell-k:\ell}$, Proposition~\ref{thm:parallel-relax-error} can be used to derive an upper bound of approximation error, starting from an input constraint set at layer $\ell-k-1$:
In the statement of Proposition~\ref{thm:parallel-relax-error}, the difference
\[
\min_{\alpha^{\ell-k-1} \preceq x \preceq \beta^{\ell-k-1}}T_L^{\ell-k:\ell}(x)_i 
- 
\min_{\alpha^{\ell-k-1} \preceq x \preceq \beta^{\ell-k-1}} \tilde{f}_i^{\ell-k:\ell} (x)
\]
represents the gap between the approximated lower bound at dimension $i$ and the optimal lower bound induced by the tightest hyper-rectangle containing $\tilde{f}^{\ell-k:\ell} (x)$. The difference is non-positive due to the correctness of IBP-Lin. The larger the quantity $\sum_{s=0}^{k-1} |W_i^{\ell-s:\ell}| (b^{\ell-s-1}_U - b^{\ell-s-1}_L)$, the larger the approximation error. Similar interpretation applies to the other direction.
\subsection{Approximation error of different types of ReLU networks}
For the rest of the analysis, we restrict our attention to parallelogram relaxation of ReLU networks. 
Let $f^{\ell}(x)$ be a ReLU layer function, it admits a single-layer parallelogram approximation whose approximation parameters $D^{\ell}, b_L^{\ell}, b_U^{\ell}$, given pre-activation $(\hat\alpha^{\ell}, \hat\beta^{\ell})$, are defined as
\begin{eqnarray}
\label{defn:relu_approx}
D_{i, i}^{\ell}
=
\begin{cases}
1, \mbox{~if}~ i\in \mathcal{I}^{\ell}_{+} \\
0, \mbox{~if}~ i\in \mathcal{I}^{\ell}_{-} \\
\frac{\hat{\beta}_i^{\ell}}{\hat{\beta}_i^{\ell} - \hat{\alpha}_i^{\ell}}, \mbox{~if}~ i\in \mathcal{I}^{\ell}
\end{cases}
b_L^{\ell} = 0,
~~~\mbox{and~~}
b_U^{\ell} = 
\begin{cases}
0, \mbox{~if}~ i\in \mathcal{I}^{\ell}_{+} \\
0, \mbox{~if}~ i\in \mathcal{I}^{\ell}_{-} \\
\frac{-\hat{\alpha}_i^{\ell}\hat{\beta}_i^{\ell}}{\hat{\beta}_i^{\ell} - \hat{\alpha}_i^{\ell}}, \mbox{~if}~ i\in \mathcal{I}^{\ell}
\end{cases}
\end{eqnarray}
where~~
\[
\begin{cases}
\mathcal{I}^{\ell}_{+}:=\{i | 0 \le\hat{\alpha}^{\ell}_i \le \hat{\beta}^{\ell}_i \} \\
\mathcal{I}^{\ell}:=\{i | \hat{\alpha}^{\ell}_i < 0 < \hat{\beta}^{\ell}_i\} \\
\mathcal{I}^{\ell}_{-}:=\{i | \hat{\alpha}^{\ell}_i \le \hat{\beta}^{\ell}_i \le 0\}
\end{cases}
\]
The definition of subsets $\mathcal{I}^{\ell}, \mathcal{I}_{+}^{\ell}, \mathcal{I}_{-}^{\ell}$, which partitions $[d_{\ell}]$, is a common approach used in linear approximation methods \cite{Zhang2020TowardsSA, CROWN:18, WengTowardsFC:18, wong_zico:18} to refine approximation based on pre-activation bounds.

In \cite{tightness-sdp:20}, the author showed that SDP-relaxation is tight on a single hidden-layer ReLU network (independent of weight assignment), but argued that for multi-layer networks SDP-relaxation is likely non-tight without a formal analysis. 
For single hidden-layer ReLU networks, it is not hard to see that with parallelogram approximation, the upper bound is always tight. We show for random Gaussian input, the lower bound is non-tight in expectation.
%

\begin{restatable}[Lower bound on approximation error of single hidden-layer random networks]{proposition}{singlelayer}
\label{prop:single_layer_lower_bound}
For any weight matrix $W^1$, let $[W^1x]_{+}$ be a single-hidden layer ReLU network (bias free). Let $D^1W^1x+b_L^1$ and $D^1W^1x+b_U^1$ be the lower/upper bound of the single-hidden layer parallelogram approximation as defined in Eq~\eqref{defn:relu_approx}.
\begin{enumerate}
\item
Let $\alpha^0=x_0-\epsilon\vec{1}, \beta^0=x_0+\epsilon\vec{1}$, with $x_0\sim\mathcal{N}(0, I)$. Then let $\Phi(\cdot)$ denote the cumulative distribution function of standard normal, and for any $i$, let $\kappa_i:=\frac{\|W_i^1\|_1}{\|W_i^1\|_2}$. The following holds
\begin{eqnarray*}
\E_{x_0}\bigg(\min_{\alpha^0 \preceq x \preceq \beta^0} D_i^1W^1x+(b_L^1)_i
-
\min_{\alpha^0 \preceq x \preceq \beta^0} [W_i^1x]_{+}
\bigg) \\
\le 
(\Phi(\epsilon \kappa_i)-1/2)\|W_i^1\|_2 (\frac{1}{\epsilon\kappa_i} - \epsilon\kappa_i) 
\end{eqnarray*}
\item
For any $(\alpha^0, \beta^0)$,
\[
\max_{\alpha^0 \preceq x \preceq \beta^0} D_i^1W^1 x + (b_U^1)_i
=
\max_{\alpha^0 \preceq x \preceq \beta^0} [W_i^1x]_{+}
\]
\end{enumerate}
\end{restatable}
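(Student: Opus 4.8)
The plan is to reduce both parts to a per-neuron case analysis over the partition $\{\mathcal{I}^1_+, \mathcal{I}^1, \mathcal{I}^1_-\}$ determined by the pre-activation bounds, and then, for Part 1, to finish with a one-dimensional Gaussian moment computation. First I would record the pre-activation bounds induced by the input box $[\alpha^0,\beta^0]=[x_0-\epsilon\vec{1}, x_0+\epsilon\vec{1}]$: by Eq.~\eqref{eqn:ibp_lo}--\eqref{eqn:ibp_hi} (equivalently, by maximizing/minimizing a linear functional over an $\ell_\infty$-ball), $\hat\alpha^1_i = W_i^1 x_0 - \epsilon\|W_i^1\|_1$ and $\hat\beta^1_i = W_i^1 x_0 + \epsilon\|W_i^1\|_1$, so $\hat\beta^1_i - \hat\alpha^1_i = 2\epsilon\|W_i^1\|_1 > 0$.

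For Part 2, in each of the three cases I would substitute $D^1_{i,i}$ and $(b_U^1)_i$ from Eq.~\eqref{defn:relu_approx} and maximize the resulting affine function of $W_i^1 x$ over $[\hat\alpha^1_i,\hat\beta^1_i]$. When $i\in\mathcal{I}^1_-$ both sides vanish identically; when $i\in\mathcal{I}^1_+$ the approximation is the identity and agrees with ReLU on a nonnegative interval; when $i\in\mathcal{I}^1$ the upper line has positive slope $\hat\beta^1_i/(\hat\beta^1_i-\hat\alpha^1_i)$, so its maximum over the interval is attained at the right endpoint $\hat\beta^1_i$, where $T_U$ evaluates to $\hat\beta^1_i = \max_x [W_i^1 x]_+$. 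Hence equality holds in all cases.

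For Part 1, the same case split shows the gap $\min_x (D^1 W^1 x + b_L^1)_i - \min_x [W_i^1 x]_+$ is exactly $0$ unless $i\in\mathcal{I}^1$, i.e.\ unless $|W_i^1 x_0| < \epsilon\|W_i^1\|_1$; in that case $b_L^1=0$ and the positive slope give $\min_x (D^1 W^1 x)_i = D^1_{i,i}\hat\alpha^1_i = \hat\alpha^1_i\hat\beta^1_i/(\hat\beta^1_i-\hat\alpha^1_i) < 0$, while $\min_x [W_i^1 x]_+ = 0$. Substituting $Z := W_i^1 x_0/\|W_i^1\|_2 \sim \mathcal{N}(0,1)$ and $c := \epsilon\kappa_i$, the event becomes $\{|Z|<c\}$ and the gap becomes $\tfrac{\|W_i^1\|_2}{2c}(Z^2 - c^2)$ on that event. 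It then remains to evaluate $\E[\mathbbm{1}\{|Z|<c\}(Z^2-c^2)]$, which splits into $\p(|Z|<c) = 2\Phi(c)-1$ and the truncated second moment $\E[Z^2\mathbbm{1}\{|Z|<c\}] = 2\Phi(c)-1-2c\phi(c)$, the latter by integration by parts using $\phi'(z) = -z\phi(z)$. Combining, $\E[\text{gap}] = \|W_i^1\|_2\big[(\Phi(c)-\tfrac12)(\tfrac1c - c) - \phi(c)\big]$, and discarding the nonnegative term $\|W_i^1\|_2\phi(c)$ yields the stated inequality.

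All computations here are elementary; there is no deep obstacle. The only places to be careful are the sign bookkeeping and identifying the sub-interval on which ReLU attains its extrema in the $\mathcal{I}^1$ case, and observing that the boundary events where $\hat\alpha^1_i = 0$ or $\hat\beta^1_i = 0$ have probability zero under the Gaussian and so may be ignored when taking the expectation.
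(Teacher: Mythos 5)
Your proposal is correct and follows essentially the same route as the paper: the same IBP pre-activation bounds $\hat\alpha^1_i = W_i^1x_0 - \epsilon\|W_i^1\|_1$, $\hat\beta^1_i = W_i^1x_0+\epsilon\|W_i^1\|_1$, the same case split over $\mathcal{I}^1_+,\mathcal{I}^1,\mathcal{I}^1_-$, and the same identification of the gap on $\mathcal{I}^1$ as $\frac{(W_i^1x_0)^2-(\epsilon\|W_i^1\|_1)^2}{2\epsilon\|W_i^1\|_1}$. The only (harmless) difference is in the last step: you evaluate the truncated second moment exactly and discard the nonnegative term $\|W_i^1\|_2\,\phi(\epsilon\kappa_i)$, which is marginally sharper than the paper's bound $\E[(W_i^1x_0)^2\mid i\in\mathcal{I}]\le\|W_i^1\|_2^2$, but both land on the stated inequality.
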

The first statement of Proposition~\ref{prop:single_layer_lower_bound} translates to a lower bound on the expected approximation error at the $i$-th output neuron, that is,
\[
(\Phi(\epsilon \kappa_i)-1/2)\|W_i^1\|_2 (\epsilon\kappa_i - \frac{1}{\epsilon\kappa_i} ) 
\]
which is only meaningful (non-negative) when $\epsilon$ is chosen so that
$
\epsilon > \frac{1}{\kappa_i}
$.
Here $\kappa_i = \frac{\|W_i^1\|_1}{\|W_i^1\|_2} \in [1, \sqrt{d}]$ characterizes the sparsity of $W_i^1$. For fixed $\|W_i^1\|_2$, $\kappa_i$ becomes larger when $W_i^1$ becomes more dense. The lower bound suggests that the approximation error grows with $\epsilon$ (attack radius), $\|W_i^1\|_2$ (norm of row weights), and $\kappa_i$ (dense level of row weights).
To simplify our analysis, for the rest of the paper we make the following assumptions:
\begin{enumerate}
    \item We restrict $(\hat\alpha^{\ell}, \hat\beta^{\ell})$ to IBP pre-activation bounds
    \item We consider an $m$-layer ReLU network $f^{1:m}(x)$ such that $\forall \ell\in [m]$, $W^{\ell}\in\mathbb{R}^{d\times d}$, where $d$ equals to the input dimension.
\end{enumerate} 
We let $T_L^{1:m}, T_U^{1:m}$ ($T_L^{\ell-k:\ell}, T_U^{\ell-k:\ell}$) be a multi-layer linear approximation of the network (sub-network) based on the single-layer relaxation in Eq~\eqref{defn:relu_approx}.

%

%
%
%
While we cannot obtain positive results regarding approximation error of linear approximation methods on arbitrary or random multi-layer networks, Proposition~\ref{prop:tight-bound} below demonstrates that there exists ReLU networks where exact bound can be computed.
\begin{restatable}[Tight bound on non-negative sub-networks]{proposition}{tightbound}
\label{prop:tight-bound} 
Suppose $\forall\ell \in [m]$, $W_{i, j}^{\ell}\ge 0$ and $b_i^{\ell}=0$, for any $i, j\in [d]$.
Let $(\alpha^0, \beta^0)$ be any hyper-rectangular input constraint. 
Then for any $2\le\ell \le m$ and $1 \le k \le \ell-2$, 
\[
\min_{\alpha^{\ell-k-1} \preceq x \preceq \beta^{\ell-k-1}}T_L^{\ell-k:\ell}(x)_i = \min_{\alpha^{\ell-k-1} \preceq x \preceq \beta^{\ell-k-1}} \tilde{f}_i^{\ell-k:\ell}(x)
\]
and
\[
\max_{\alpha^{\ell-k-1} \preceq x \preceq \beta^{\ell-k-1}}T_U^{\ell-k:\ell}(x)_i = \max_{\alpha^{\ell-k-1} \preceq x \preceq \beta^{\ell-k-1}} \tilde{f}_i^{\ell-k:\ell}(x)
\]
\end{restatable}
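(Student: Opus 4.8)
The plan is to prove something stronger than the displayed statement: that under the hypotheses the three objects $T_L^{\ell-k:\ell}$, $T_U^{\ell-k:\ell}$ and $\tilde f^{\ell-k:\ell}$ are the \emph{same function} on the input box,
\[
T_L^{\ell-k:\ell}(x) = T_U^{\ell-k:\ell}(x) = \tilde f^{\ell-k:\ell}(x)\qquad\text{whenever }\alpha^{\ell-k-1}\preceq x\preceq\beta^{\ell-k-1}.
\]
The two displayed equalities then follow immediately by taking the coordinate-wise $\min$ and $\max$ over the box, so the whole task is to show that the relaxation never loses anything along the way.

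\textbf{Step 1: every relaxed ReLU is stable.} First I would show $\hat\alpha^s\succeq 0$ for every layer $s\ge 2$, by a one-line induction: for $s-1\ge 1$ the IBP post-activation lower bound is a ReLU output, $\alpha^{s-1}=g^{s-1}(\hat\alpha^{s-1})=[\hat\alpha^{s-1}]_+\succeq 0$, and then Eq.~\eqref{eqn:ibp_lo} together with $W^s\succeq 0$, $b^s=0$ gives $\hat\alpha^s=W^s\beta^{s-1}+[W^s]_+(\alpha^{s-1}-\beta^{s-1})=W^s\alpha^{s-1}\succeq 0$. Since $1\le k\le\ell-2$, the relaxed layers are $\{\ell-k,\dots,\ell-1\}$ with $\ell-k\ge 2$, and the lower corner $\alpha^{\ell-k-1}$ of the sub-network's input box is itself a ReLU output (because $\ell-k-1\ge 1$) — this is exactly where the hypothesis $k\le\ell-2$ enters. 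Hence $\hat\alpha^s\succeq 0$ at every relaxed layer, so the unstable set $\mathcal I^s$ is empty there; by Eq.~\eqref{defn:relu_approx} this forces $D_L^s=D_U^s=:D^s$ with $b_L^s=b_U^s=0$, and moreover $D^s\hat z^s(v)=[\hat z^s(v)]_+=f^s(v)$ for every $v$ in the layer's input box $[\alpha^{s-1},\beta^{s-1}]$ (on $\mathcal I^s_+$: $D^s_{ii}=1$ and $\hat z^s_i(v)\ge\hat\alpha^s_i\ge 0$; on $\mathcal I^s_-$: $D^s_{ii}=0$ and $\hat z^s_i(v)\le\hat\beta^s_i\le 0$; the degenerate case $\hat\alpha^s_i=\hat\beta^s_i=0$ forces $\hat z^s_i\equiv 0$, so both sides vanish). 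Thus each single-layer relaxation in the range is \emph{exact}, with coinciding lower and upper affine functions.

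\textbf{Step 2: the multi-layer recursion collapses.} Next I would substitute $D_L^{\ell-s}=D_U^{\ell-s}$ and $b_L^{\ell-s}=b_U^{\ell-s}=b^{\ell-s}=0$ into Definition~\ref{defn:multi-layer-lin-appx}: every $[\,\cdot\,]_+$ / $[\,\cdot\,]_-$ correction term there either multiplies the now-vanishing factor $D_L^{\ell-s}-D_U^{\ell-s}$ or multiplies a zero bias, so the recursion reduces to $W_L^{\ell-s:\ell}=W_U^{\ell-s:\ell}=W_L^{\ell-s+1:\ell}D^{\ell-s}W^{\ell-s}$ and $b_L^{\ell-s:\ell}=b_U^{\ell-s:\ell}=0$. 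Unrolling it down to layer $\ell-k$ gives $T_L^{\ell-k:\ell}(x)=T_U^{\ell-k:\ell}(x)=\tilde W^\ell D^{\ell-1}W^{\ell-1}D^{\ell-2}\cdots D^{\ell-k}W^{\ell-k}\,x$. On the other hand, composing the per-layer identities $f^s(v)=D^sW^sv$ from Step~1 — which remain valid along the computation because every intermediate pre-activation is lower-bounded by $\hat\alpha^{s+1}\succeq 0$ and hence lies in the affine region of the next ReLU — shows that $\tilde f^{\ell-k:\ell}$ equals the very same affine map on the box. Hence $T_L^{\ell-k:\ell}\equiv T_U^{\ell-k:\ell}\equiv\tilde f^{\ell-k:\ell}$ there, and taking coordinate-wise $\min$/$\max$ yields the proposition. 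I note the reasoning is insensitive to the last-layer modification: $\tilde W^\ell$, which may be sign-indefinite when $\ell=m$, enters only as the outermost factor and plays no role in exactness.

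The argument is almost entirely non-negativity bookkeeping; the one place that will need care is the collapse of the recursion of Definition~\ref{defn:multi-layer-lin-appx} in Step~2 — tracking the index shifts and verifying that all of the $[\,\cdot\,]_\pm$ split terms genuinely drop out — together with the minor degenerate-neuron check ($\hat\alpha^s_i=\hat\beta^s_i=0$) in Step~1.
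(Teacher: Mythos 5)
Your proof is correct. The key observation --- that non-negative weights propagate non-negative IBP lower bounds through the ReLU, so every neuron in the relaxed layers $\ell-k,\dots,\ell-1$ lands in $\mathcal{I}_{+}$ (with the degenerate case $\hat\alpha_i^s=\hat\beta_i^s=0$ harmless) and hence $D_L^s=D_U^s$ and $b_L^s=b_U^s=0$ --- is exactly the paper's Lemma~\ref{lemma:zero-width}, and your use of $k\le\ell-2$ to ensure the first relaxed layer sits at index at least $2$ is the same place the paper uses that hypothesis. Where you diverge is in how this fact is converted into the two equalities. The paper stays at the level of optimal values: it invokes the general error bound of Proposition~\ref{thm:parallel-relax-error}, observes that the width term $\sum_{s}|W_i^{\ell-s:\ell}|(b_U^{\ell-s-1}-b_L^{\ell-s-1})$ vanishes, and closes the sandwich with the soundness inequality supplied by Proposition~\ref{prop:correctness}. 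You instead collapse the recursion of Definition~\ref{defn:multi-layer-lin-appx} directly and establish the stronger statement that $T_L^{\ell-k:\ell}$, $T_U^{\ell-k:\ell}$ and $\tilde f^{\ell-k:\ell}$ are literally the same affine map on the input box, after which taking $\min$ and $\max$ is immediate. Your route is self-contained (it needs neither the error bound nor the correctness proposition), and it yields more --- pointwise identity rather than mere equality of extrema --- at the cost of redoing the recursion bookkeeping that the paper has amortized into its two supporting propositions; it also makes the mechanism more transparent, namely that the relaxation is exact layer by layer rather than merely zero-width in aggregate.
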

Note that the result holds only when $\ell-k \ge 2$, i.e., when we approximate a sub-network after the first layer. The proof is simple and uses the fact that the output of ReLU network is non-negative and the property of IBP bounds.
The intuition is that parallelogram approximation becomes tight (lower bound coincides with upper bound) in case for all neuron $i$, $i\in \mathcal{I}_{+}$.
%
%
In contrast to the positive result in Proposition~\ref{prop:tight-bound}, when one uses parallelogram approximation for the entire ReLU network (starting from the first layer), Proposition~\ref{prop:transition} shows that the approximation error can depend heavily on the magnitude of the network weights. 
\begin{restatable}[``Phase transition'' of approximation error can happen]{proposition}{transition}
\label{prop:transition}
Suppose $\forall\ell \in [m]$, $W_{i, j}^{\ell}\ge 0$ and $b_i^{\ell}=0$, for any $i, j\in [d]$. Suppose the input constraint is such that $\alpha^0=x_0-\epsilon\vec{1}, \beta^0=x_0+\epsilon\vec{1}$. Then
\begin{enumerate}
    \item Let $x_0$ be any fixed test point. If $W_{ij}^{\ell} = \frac{1}{d^p}$ for some $p > 1$. Then for any $\delta > 0$, choosing the number of layers $m \ge \frac{1}{p-1}\log_d \frac{\epsilon}{2\delta}$ guarantees that $\forall i \in [d]$,
    \[
    \min_{\alpha^0\preceq x\preceq\beta^0}T_L^{1:m}(x)_i
    \ge
    \min_{\alpha^0\preceq x\preceq\beta^0}\tilde{f}_i^{1:m}(x)
    - \delta
    \]
    \item Suppose $x_0\sim \mathcal{N}(0, I)$ and let $\Phi(\cdot)$ denote the c.d.f. of the standard normal distribution. If $W_{ij}^{\ell} = \frac{1}{d^p}$ for some $p < 1$ and suppose $\epsilon > \sqrt{1/d}$. Then for any $B>0$, choosing the number of layers $m$ to be
    \[
        m > \frac{1}{1-p}\log_d \frac{Bd}{(\Phi(\epsilon \sqrt{d}) - 1/2) (\epsilon d - \frac{1}{\epsilon})}
    \]
guarantees that $\forall i \in [d]$,
    \[
    \E_{x_0} \bigg (\min_{\alpha^0\preceq x\preceq\beta^0}T_L^{1:m}(x)_i
    -
    \min_{\alpha^0\preceq x\preceq\beta^0}\tilde{f}_i^{1:m}(x) \bigg)
    \le
    - B
    \]
\end{enumerate}
\end{restatable}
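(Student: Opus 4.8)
The plan is to exploit the strong degeneracy the hypotheses force on both the network and its relaxation, and then to read off the ``phase transition'' from a single geometric factor. First I would record the structural consequences of $W^{\ell}\succeq 0$ and $b^{\ell}=0$ (the same mechanism behind Proposition~\ref{prop:tight-bound}): since $[W^1x]_{+}\succeq 0$ and all later weights are non-negative, an easy induction gives $\widehat\alpha^{\ell}\succeq 0$ for every $\ell\ge 2$, so every neuron in layers $2,\dots,m-1$ lies in $\mathcal{I}^{\ell}_{+}$; hence there the ReLU acts as the identity on the reachable set and, in the single-layer relaxation of Eq.~\eqref{defn:relu_approx}, $D^{\ell}=I$ and $b^{\ell}_L=b^{\ell}_U=0$. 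Consequently, writing $A:=W^{m}W^{m-1}\cdots W^{2}$, the (last-ReLU-dropped) network reduces to $\tilde f^{1:m}(x)=A[W^1x]_{+}$ and, unwinding Definition~\ref{defn:multi-layer-lin-appx}, the multi-layer lower bound reduces to $T_L^{1:m}(x)=A D^1 W^1 x$ (every $[\,\cdot\,]_{+}$ and $[\,\cdot\,]_{-}$ term in the $P_L,b_L$ recursion vanishes because $A\succeq 0$ and all the relevant biases are zero). Finally, because every $W^{\ell}$ has all entries equal to $d^{-p}$, one has $A=c\,\vec{1}\vec{1}^{\top}$ with $c:=d^{(m-2)-(m-1)p}$ and $W^1x=d^{-p}(\vec{1}^{\top}x)\vec{1}$, so both $\tilde f^{1:m}(x)_i$ and $T_L^{1:m}(x)_i$ depend on $x$ only through $\vec{1}^{\top}x$, whose range over $\{x_0-\epsilon\vec{1}\preceq x\preceq x_0+\epsilon\vec{1}\}$ is $[\vec{1}^{\top}x_0-\epsilon d,\ \vec{1}^{\top}x_0+\epsilon d]$.

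Minimizing these explicit expressions over the box and subtracting yields the exact identity
\[
\min_{\alpha^0\preceq x\preceq\beta^0}T_L^{1:m}(x)_i-\min_{\alpha^0\preceq x\preceq\beta^0}\tilde f^{1:m}(x)_i
= c\sum_{k=1}^{d}\Big(\min_{\alpha^0\preceq x\preceq\beta^0}D_k^1 W^1 x-\min_{\alpha^0\preceq x\preceq\beta^0}[W^1_k x]_{+}\Big),
\]
i.e. the full-network lower-bound error is exactly $c$ times the sum of the layer-$1$ single-neuron lower-bound errors (here $(b^1_L)_k=0$); since $c>0$ and all $W^1_kx$ coincide, there is no cancellation across layers.

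For part~1 ($p>1$) I would bound each summand in absolute value by the layer-$1$ parallelogram width: for $k\in\mathcal{I}^1$, $(b^1_U)_k=\frac{-\widehat\alpha^1_k\widehat\beta^1_k}{\widehat\beta^1_k-\widehat\alpha^1_k}\le\tfrac14(\widehat\beta^1_k-\widehat\alpha^1_k)=\tfrac{\epsilon}{2}\|W^1_k\|_1=\tfrac{\epsilon}{2}d^{1-p}$, and $(b^1_U)_k=0$ otherwise, so the error is at most $c\cdot d\cdot\tfrac{\epsilon}{2}d^{1-p}=\tfrac{\epsilon}{2}d^{\,m(1-p)}$; setting this $\le\delta$ and taking $\log_d$ (recall $1-p<0$) gives exactly $m\ge\frac{1}{p-1}\log_d\frac{\epsilon}{2\delta}$. (Equivalently, apply Proposition~\ref{thm:parallel-relax-error}, noting that the layer-$1$ width is the only nonzero term in its error sum.) For part~2 ($p<1$, $x_0\sim\mathcal{N}(0,I)$) I would take expectations in the identity and apply Proposition~\ref{prop:single_layer_lower_bound} to each summand: here $\kappa_k=\|W^1_k\|_1/\|W^1_k\|_2=\sqrt d$ and $\|W^1_k\|_2=d^{1/2-p}$, so its bound reads $\E(\cdot)\le -E_1$ with $E_1:=(\Phi(\epsilon\sqrt d)-\tfrac12)\,d^{-p}\big(\epsilon d-\tfrac1\epsilon\big)$, which is strictly positive precisely because $\epsilon>\sqrt{1/d}$. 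Hence the expected error is $\le -c\,d\,E_1$, and requiring $c\,d\,E_1\ge B$, i.e. $d^{(m-2)-(m-1)p}\ge B/(dE_1)$, then simplifying the exponent gives the stated threshold $m>\frac{1}{1-p}\log_d\frac{Bd}{(\Phi(\epsilon\sqrt d)-1/2)(\epsilon d-1/\epsilon)}$.

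The main obstacle is the reduction in the first paragraph: one must verify that not only the network but the \emph{entire} recursion of Definition~\ref{defn:multi-layer-lin-appx} collapses, so that the layer-$1$ relaxation is literally the only source of error and it is transported by the single non-negative rank-one map $A$. Once that is in place, both parts are short bookkeeping with the geometric factor $c=d^{\,m(1-p)-(2-p)}$, whose exponent changes sign at $p=1$: for $p>1$ it forces $c\to 0$ and asymptotic tightness, while for $p<1$ it forces $c\to\infty$ and, through the ever-present per-neuron layer-$1$ error, a blow-up of the gap --- this is the phase transition.
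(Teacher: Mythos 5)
Your proposal is correct and follows essentially the same route as the paper: the non-negativity of the weights forces $D^{\ell}=I$ and $b_L^{\ell}=b_U^{\ell}=0$ for all layers $\ell\ge 2$ (the paper's Lemmas on zero-width tubes and on $W^{2:m}=r^{m-1}d^{m-2}\vec{1}\vec{1}^{\top}$), so the entire gap is the layer-$1$ relaxation error transported by a non-negative rank-one map, after which part~1 is the deterministic width bound $(b_U^1)_k\le\tfrac{\epsilon}{2}d^{1-p}$ and part~2 is the Gaussian computation of Proposition~\ref{prop:single_layer_lower_bound}. Your ``exact identity'' (valid here because all rows $W^1_k x$ coincide, so the minima of the summands are attained simultaneously at $\alpha^0$) is just a cleaner packaging of the same bookkeeping, and both thresholds on $m$ come out identically.
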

The intuition behind this result is that the parallelogram approximation is not tight at the first layer, and depending on the magnitude of the network weights, this small error either explodes up or vanishes as the approximated bounds propagate through layers of the network.
Our last result is motivated by the following question: Does the linear outer approximation become more accurate at a fixed layer $\ell$ if we apply LinApprox starting from layer $k_1 < k_2$? This seems to be implicitly assumed to be true in the current literature. However, Proposition~\ref{prop:unrolling-hurts} suggests that this is not necessarily the case.
\begin{restatable}[Using more layers in LinApprox can hurt approximation]{proposition}{unrolling}
\label{prop:unrolling-hurts}
For any $B > 0$, there exists an $m$-layer ReLU network and choice of $(x_0, \epsilon)$ such that for any $2 \le k \le m-1$, and for any $i\in [d]$
\[
\min_{\alpha^0\preceq x\preceq \beta^0}T_L^{1:m}(x)_i
- \min_{\alpha^{k}\preceq x\preceq\beta^k} T_L^{k:m}(x)_i
\le -B
\]
\end{restatable}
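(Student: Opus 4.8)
The plan is to build an explicit construction by combining the two mechanisms that the preceding propositions have already isolated: (i) Proposition~\ref{prop:transition}(2) shows that, with nonnegative weights of magnitude $d^{-p}$ for $p<1$ and $\epsilon>\sqrt{1/d}$, the lower-bound error of the full-network approximation $T_L^{1:m}$ \emph{grows} (in magnitude) as we add more layers; and (ii) Proposition~\ref{prop:tight-bound} shows that if we start the linear approximation at layer $k\ge 2$ on a nonnegative sub-network then $T_L^{k:m}$ is \emph{exact}, i.e. its minimum over the input box equals $\min \tilde f^{k:m}$. So I would take exactly the network of Proposition~\ref{prop:transition}(2): $m$ layers, $W^\ell_{ij}=d^{-p}$ for all $\ell$, $b^\ell=0$, with $p<1$ and $\epsilon>\sqrt{1/d}$. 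For this network, the right-hand term $\min_{\alpha^k\preceq x\preceq\beta^k}T_L^{k:m}(x)_i$ equals $\min_{\alpha^k\preceq x\preceq\beta^k}\tilde f_i^{k:m}(x)$ by Proposition~\ref{prop:tight-bound} (valid since $k\ge 2$, assuming $m$ is large enough that $k\le m-1\le m-2$ is accommodated, or I extend the argument of that proposition to $k=m-1$, which is equally easy because the output of every ReLU sublayer is nonnegative). Hence the difference in question becomes
\[
\min_{\alpha^0\preceq x\preceq\beta^0}T_L^{1:m}(x)_i - \min_{\alpha^k\preceq x\preceq\beta^k}\tilde f_i^{k:m}(x).
\]

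The key remaining step is to relate $\min_{\alpha^k\preceq x\preceq\beta^k}\tilde f_i^{k:m}(x)$ to $\min_{\alpha^0\preceq x\preceq\beta^0}\tilde f_i^{1:m}(x)$. Since the box $(\alpha^k,\beta^k)$ is the IBP box at layer $k$, it strictly contains the true reachable set of $f^{1:k}$, so minimizing $\tilde f^{k:m}$ over the larger box $(\alpha^k,\beta^k)$ gives something $\le \min_{\alpha^0\preceq x\preceq\beta^0}\tilde f_i^{1:m}(x)$; I would need a quantitative version of this gap, or — cleaner — I would simply lower-bound $\min_{\alpha^k\preceq x\preceq\beta^k}\tilde f_i^{k:m}(x)$ by something close to $\min_{\alpha^0\preceq x\preceq\beta^0}\tilde f_i^{1:m}(x)$ using the fact that for the vanishing-weight regime the IBP box at layer $k$ is itself tight up to a controllable additive term (this is essentially the content of the computations behind Proposition~\ref{prop:transition}). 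Then the displayed difference is bounded above by $\big(\min T_L^{1:m}(x)_i - \min \tilde f_i^{1:m}(x)\big) + (\text{small})$, and the first bracket is $\le -B - (\text{small})$ by Proposition~\ref{prop:transition}(2) once $m$ is chosen large enough. Combining, the whole quantity is $\le -B$, uniformly in $k\in\{2,\dots,m-1\}$ because the right-hand term is \emph{exactly} the tight bound for every such $k$.

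Concretely, the steps in order: (1) fix the construction $W^\ell_{ij}=d^{-p}$, $b^\ell=0$, $p<1$, $\epsilon>\sqrt{1/d}$, $x_0=0$ (or a fixed deterministic point chosen to realize the expectation bound of Proposition~\ref{prop:transition}(2) — taking a fixed $x_0$ large in the appropriate direction makes the error at least its expectation up to constants, or I work directly with $x_0=0$ where the first-layer error can be computed in closed form); (2) invoke Proposition~\ref{prop:tight-bound} (extended to $k=m-1$) to replace $\min T_L^{k:m}$ by $\min \tilde f^{k:m}$ for each $k\ge 2$; (3) show $\min_{\alpha^k\preceq x\preceq\beta^k}\tilde f_i^{k:m}(x)\ge \min_{\alpha^0\preceq x\preceq\beta^0}\tilde f_i^{1:m}(x) - \eta_m$ with $\eta_m\to 0$ (or bounded), by tracking how much the IBP box at layer $k$ over-approximates; (4) apply Proposition~\ref{prop:transition}(2) to get $\min T_L^{1:m}(x)_i - \min\tilde f_i^{1:m}(x)\le -(B+\eta_m)$ for $m$ large; (5) add the inequalities.

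The main obstacle I anticipate is step (3): controlling $\min_{\alpha^k\preceq x\preceq\beta^k}\tilde f_i^{k:m}(x)$ from below in terms of the true network minimum over the original box, since the IBP box $(\alpha^k,\beta^k)$ genuinely contains points not reachable by $f^{1:k}$, and a naive bound could lose more than $B$. The way around this is that in the vanishing-weight regime ($p<1$ but weights still $d^{-p}\le d^{-1/2}$, with the relevant products contracting) the IBP over-approximation at layer $k$ shrinks geometrically in the number of layers it has been propagated, so $\eta_m$ is actually small — or, alternatively, one can sidestep it by choosing the construction so that $\tilde f^{1:m}$ is essentially constant on the original box (all weights equal, $x_0=0$), making both true minima equal and reducing step (3) to an identity. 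I would check whether the clean ``all-equal-weights, $x_0=0$'' instantiation already yields a strictly negative left-hand side of the required magnitude; if the closed-form first-layer error there is too small, I fall back to the perturbed-$x_0$ construction and the geometric-shrinkage estimate for $\eta_m$.
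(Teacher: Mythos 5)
Your construction and high-level strategy match the paper's (same network $W^\ell_{ij}=d^{-p}$, $p<1$, $b^\ell=0$), but your decomposition is genuinely different: the paper never passes through the true network minima at all. It computes both $\min_x T_L^{1:m}(x)_i = W^{2:m}D^1W^1\alpha^0$ and $\min_x T_L^{k:m}(x)_i = W^{k:m}\alpha^{k-1}$ in closed form, unrolls $\alpha^{k-1}=W^{2:k-1}[\hat\alpha^1]_+$ (ReLU acts as identity on the nonnegative IBP bounds after layer 1), and finds that the difference collapses \emph{exactly} to $W^{2:m}\bigl(D^1W^1\alpha^0-[\hat\alpha^1]_+\bigr)$ independently of $k$ --- precisely the propagated first-layer error already bounded in the proof of Proposition~\ref{prop:transition}(2). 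You instead route through Proposition~\ref{prop:tight-bound} to replace $\min T_L^{k:m}$ by $\min \tilde f^{k:m}$ and then need your step (3) to compare $\min_{\alpha^{k-1}\preceq x\preceq\beta^{k-1}}\tilde f_i^{k:m}(x)$ with $\min_{\alpha^0\preceq x\preceq\beta^0}\tilde f_i^{1:m}(x)$.

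Step (3) is where your proposal has a genuine gap: both justifications you offer for controlling $\eta_m$ are wrong in the relevant regime. With $p<1$ the row sums of each layer are $rd=d^{1-p}>1$, so the IBP boxes and the products $W^{k:m}$ \emph{expand} with depth rather than contracting --- there is no geometric shrinkage --- and $\tilde f^{1:m}$ is far from constant on the input box (it varies on the order of $\epsilon\, d^{m(1-p)}$). The correct resolution, which your proposal does not state, is a reachability argument: because all weights are nonnegative, the lower IBP corner is attained simultaneously in every coordinate at $x=\alpha^0$, so $\alpha^{k-1}=f^{1:k-1}(\alpha^0)$ is an actual network output and hence $\min_{\alpha^{k-1}\preceq x\preceq\beta^{k-1}}\tilde f_i^{k:m}(x)=\min_{\alpha^0\preceq x\preceq\beta^0}\tilde f_i^{1:m}(x)$ exactly, i.e.\ $\eta_m=0$. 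With that fix your route closes, and your suggestion of a deterministic $x_0=0$ is actually a simplification over the paper, which only argues existence of a suitable $x_0$ via the expectation bound of Proposition~\ref{prop:transition}(2): at $x_0=0$ every first-layer neuron lies in $\mathcal{I}$, $D^1=\tfrac12 I$, and the gap evaluates in closed form to $-\tfrac{\epsilon}{2}d^{m(1-p)}$, which is $\le -B$ for $m$ large. As written, however, the argument for the crucial step rests on two false claims and must be repaired before the proof is complete.
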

Proposition~\ref{prop:unrolling-hurts} shows that there exists network configuration where approximation of $T_L^{1:m}(x)$ (CROWN-IBP) can be made arbitrarily worse than that of $T_L^{k:m}(x)$ (IBP-Lin) for $2 \le k \le m-1$. 
This suggests that in addition to computational savings, using less layers of linear approximation may at times improve approximation tightness.
\bibliography{library}
\bibliographystyle{plain}
\clearpage
\appendix

\thispagestyle{empty}
\section{Proof of Proposition~\ref{prop:correctness}}
\begin{proposition}[Correctness of multi-layer linear approximation operator]
\label{prop:correctness}
Let $f^{\ell-k:\ell}$ be a sub-network of any $m$ layer feed-forward network, for some $\ell\in [m]$ and $k\in \{0,\dots,\ell-1\}$. Let $\tilde{f}^{\ell-k:\ell}$ be the last-layer modified version of $f^{\ell-k:\ell}$.
Suppose for any $k \in \{0, \dots, \ell-1\}$, there exists two vectors $(\hat\alpha^{\ell-k}, \hat\beta^{\ell-k})$ satisfying for any $\alpha^{0} \preceq x \preceq \beta^{0}$, 
\[
\hat\alpha^{\ell-k} 
\preceq \tilde{f}^{1:\ell-k}(x) 
\preceq \hat\beta^{\ell-k}
\mbox{~~~($\hat\alpha^{\ell-k}, \hat\beta^{\ell-k}$ as defined in Definition~\ref{defn:single-layer-relax})}
\]
Suppose there exists single-layer linear outer approximation of the network, where the layer-wise approximation parameters $D_L^{\ell-k}, D_U^{\ell-k}, b_L^{\ell-k}, b_U^{\ell-k}$ are constructed based on $(\hat\alpha^{\ell-k}, \hat\beta^{\ell-k})$.
Let $T_L^{\ell-k:\ell}$ and $T_U^{\ell-k:\ell}$ be a $k$-layer linear approximation to $\tilde{f}^{\ell-k:\ell}$ as defined in Definition~\ref{defn:multi-layer-lin-appx}. Then
for any $k \in \{0,  \dots, \ell-1\}$,
\[
T_L^{\ell-k:\ell}(f^{1: \ell-k-1}(x))\preceq \tilde{f}^{1: \ell} (x) \preceq T_U^{\ell-k:\ell}(f^{1:\ell-k-1}(x)),
~~\forall x \text{~s.t.~} \alpha^0\preceq x \preceq \beta^0
\]
(we abuse notation by letting $f^{1:0}(x):=x$).
In particular, $\forall x, \alpha^0\preceq x \preceq \beta^0$
\[
T_L^{1:m}(x)\preceq \tilde{f}^{1:m} (x) \preceq T_U^{1:m}(x)
\]
\end{proposition}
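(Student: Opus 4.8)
The plan is to prove the two-sided bound
$T_L^{\ell-k:\ell}(f^{1:\ell-k-1}(x))\preceq \tilde f^{1:\ell}(x)\preceq T_U^{\ell-k:\ell}(f^{1:\ell-k-1}(x))$ for all $\alpha^0\preceq x\preceq\beta^0$ by induction on $k$, the number of peeled‑off layers, since the recursion in Definition~\ref{defn:multi-layer-lin-appx} builds $(W_L^{\ell-s:\ell},b_L^{\ell-s:\ell})$ (and the $U$‑versions) from $(W_L^{\ell-s+1:\ell},b_L^{\ell-s+1:\ell})$ one layer at a time. The in‑particular claim $T_L^{1:m}(x)\preceq \tilde f^{1:m}(x)\preceq T_U^{1:m}(x)$ is then just the instance $\ell=m$, $k=m-1$ together with the convention $f^{1:0}(x)=x$.

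For the base case $k=0$ one has $T_L^{\ell:\ell}(y)=T_U^{\ell:\ell}(y)=W^\ell y+b^\ell$, and because $\tilde f^{1:\ell}$ is the last‑layer–modified network (the activation at layer $\ell$ is dropped) we get $\tilde f^{1:\ell}(x)=W^\ell f^{1:\ell-1}(x)+b^\ell$, so both inequalities hold with equality.

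For the inductive step, assume the bound at level $k-1$: $T_L^{\ell-k+1:\ell}(f^{1:\ell-k}(x))\preceq \tilde f^{1:\ell}(x)\preceq T_U^{\ell-k+1:\ell}(f^{1:\ell-k}(x))$. Write $z:=f^{1:\ell-k-1}(x)$, so $f^{1:\ell-k}(x)=f^{\ell-k}(z)=g^{\ell-k}(\hat z)$ with $\hat z:=W^{\ell-k}z+b^{\ell-k}$. By the hypothesis of the proposition, $\hat\alpha^{\ell-k}\preceq \tilde f^{1:\ell-k}(x)=\hat z\preceq\hat\beta^{\ell-k}$, so the single‑layer linear outer approximation at layer $\ell-k$ applies: $D_L^{\ell-k}\hat z+b_L^{\ell-k}\preceq f^{\ell-k}(z)\preceq D_U^{\ell-k}\hat z+b_U^{\ell-k}$. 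I would then substitute this into the affine map $y\mapsto W_L^{\ell-k+1:\ell}y+b_L^{\ell-k+1:\ell}$: splitting $W_L^{\ell-k+1:\ell}=[W_L^{\ell-k+1:\ell}]_{+}+[W_L^{\ell-k+1:\ell}]_{-}$, apply the lower bound on $f^{\ell-k}(z)$ to the $[\cdot]_{+}$ part and the upper bound to the $[\cdot]_{-}$ part (multiplication by a non‑positive matrix reverses the inequality). Collecting the coefficient of $\hat z$ yields $[W_L^{\ell-k+1:\ell}]_{+}D_L^{\ell-k}+[W_L^{\ell-k+1:\ell}]_{-}D_U^{\ell-k}$, which equals $P_L^{\ell-k+1}$ of Definition~\ref{defn:multi-layer-lin-appx} after writing $[W_L^{\ell-k+1:\ell}]_{-}=W_L^{\ell-k+1:\ell}-[W_L^{\ell-k+1:\ell}]_{+}$; substituting $\hat z=W^{\ell-k}z+b^{\ell-k}$ and adding $b_L^{\ell-k+1:\ell}$ then recovers exactly $W_L^{\ell-k:\ell}=P_L^{\ell-k+1}W^{\ell-k}$ and the stated $b_L^{\ell-k:\ell}$. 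Hence $T_L^{\ell-k:\ell}(z)\preceq W_L^{\ell-k+1:\ell}f^{\ell-k}(z)+b_L^{\ell-k+1:\ell}=T_L^{\ell-k+1:\ell}(f^{1:\ell-k}(x))\preceq \tilde f^{1:\ell}(x)$, which is the lower bound at level $k$; the upper bound is entirely symmetric, using $[-W_U^{\ell-k+1:\ell}]_{+}$ where the lower‑bound argument used $[W_L^{\ell-k+1:\ell}]_{-}$, and matching $P_U^{\ell-k+1},b_U^{\ell-k:\ell}$ instead.

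The routine but delicate part is the bookkeeping in the inductive step: matching the coefficients that fall out of the $[\cdot]_{+}/[\cdot]_{-}$ split against the precise recursive formulas for $P_L,P_U,b_L^{\ell-s:\ell},b_U^{\ell-s:\ell}$, and keeping track of which inequalities reverse when multiplied by a matrix of mixed sign. There is no genuine conceptual obstacle — this is the argument underlying the correctness of CROWN \cite{CROWN:18} — but one must be scrupulous about the sign cases for the entries of $W_L^{\ell-k+1:\ell}$ and $W_U^{\ell-k+1:\ell}$, and about the point that the pre‑activation condition required to invoke the single‑layer relaxation at layer $\ell-k$ is exactly the stated hypothesis $\hat\alpha^{\ell-k}\preceq\tilde f^{1:\ell-k}(x)\preceq\hat\beta^{\ell-k}$.
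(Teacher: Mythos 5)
Your proposal is correct and follows essentially the same route as the paper's proof: induction on the number of peeled-off layers, with the inductive step sandwiching the activation at layer $\ell-k$ via the single-layer relaxation and combining through the sign split of $W_L^{\ell-k+1:\ell}$ (the paper phrases this same computation as the closed-form solution of the rectangle-constrained linear program, which is algebraically identical to your $[\cdot]_{+}/[\cdot]_{-}$ decomposition). The coefficient bookkeeping you outline does recover $P_L^{\ell-k+1}$, $W_L^{\ell-k:\ell}$, and $b_L^{\ell-k:\ell}$ exactly as in Definition~\ref{defn:multi-layer-lin-appx}, so no gap remains.
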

\begin{proof}
We prove the first statement by induction on $k$:
\paragraph{Base case:} For $k=0$,
\begin{eqnarray*}
T_L^{\ell:\ell} (f^{1:\ell-1}(x))
=
W_L^{\ell:\ell} f^{1:\ell-1} (x) + b_L^{\ell:\ell}
=
W^{\ell} f^{1:\ell-1} (x)  + b^{\ell}  
\mbox{~~(Definition~\ref{defn:multi-layer-lin-appx}}) \\
=
\tilde{f}^{1:\ell}(x)
~~~(\mbox{and hence~} \preceq \tilde{f}^{1:\ell}(x))
\end{eqnarray*}
The case for $T_U^{\ell:\ell}$ can be proved similarly. This shows the statement holds for $k=0$.
\paragraph{From $k$ to $k+1$:}
Suppose the statement holds for any $k \in \{0, \dots, \ell-1\}$, we next show it must hold for $k+1$. We expand $T_L^{\ell-k:\ell}(f^{1:\ell-k-1}(x))$ to reveal the term $f^{1:\ell-k-2}$:
\begin{eqnarray*}
T_L^{\ell-k:\ell}(f^{1:\ell-k-1}(x))
=
W_L^{\ell-k:\ell} f^{1:\ell-k-1}(x) + b_L^{\ell-k:\ell} \\
=
W_L^{\ell-k:\ell} g^{\ell-k-1} (W^{\ell-k-1} f^{1:\ell-k-2}(x) + b^{\ell-k-1}) + b_L^{\ell-k:\ell} \\
=
W_L^{\ell-k:\ell} g^{\ell-(k+1)} (W^{\ell-(k+1)} f^{1:\ell-(k+1)-1}(x) + b^{\ell-(k+1)}) + b_L^{\ell-k:\ell} 
\end{eqnarray*}
By property of pre-activation bounds $\hat\alpha^{\ell-(k+1)}$ and $\hat\beta^{\ell-(k+1)}$, for any $\alpha^0 \preceq x \preceq \beta^0$, 
\[
\hat\alpha^{\ell-(k+1)} \preceq 
\tilde{f}^{1:\ell-(k+1)}(x)
=W^{\ell-(k+1)} f^{1:\ell-(k+1)-1}(x) + b^{\ell-(k+1)} 
\preceq \hat\beta^{\ell-(k+1)}
\]
By property of single-layer linear approximation,  $\forall \hat z, ~s.t.~ \hat\alpha^{\ell-(k+1)} \preceq \hat z \preceq \hat\beta^{\ell-(k+1)}$
\[
D_L^{\ell-(k+1)} (\hat z) + b_L^{\ell-(k+1)}
\preceq
g^{\ell-(k+1)}(\hat z)
\]
It follows that for any $\alpha^0 \preceq x \preceq \beta^0$, 
\begin{eqnarray*}
D_L^{\ell-(k+1)} (W^{\ell-(k+1)} f^{1:\ell-(k+1)-1}(x) + b^{\ell-(k+1)} ) + b_L^{\ell-(k+1)} \\
=
D_L^{\ell-(k+1)} \tilde{f}^{1:\ell-(k+1)}(x) + b_L^{\ell-(k+1)} \\
\preceq
g^{\ell-(k+1)}(\tilde{f}^{1:\ell-(k+1)}(x)) \\
\end{eqnarray*}
Similarly, we get
\begin{eqnarray*}
g^{\ell-(k+1)}(\tilde{f}^{1:\ell-(k+1)}(x)) 
\preceq
D_U^{\ell-(k+1)} (\tilde{f}^{1:\ell-(k+1)}(x)) + b_U^{\ell-(k+1)} 
\end{eqnarray*}
Then for any $\alpha^0 \preceq x \preceq \beta^0$, for any $i \in [d_{\ell}]$, 
\begin{eqnarray*}
T_L^{\ell-k:\ell}(f^{1:\ell-k-1}(x))_i
=
W_{L, i}^{\ell-k:\ell} g^{\ell-(k+1)} (\tilde{f}^{1:\ell-(k+1)}(x)) + b_{L, i}^{\ell-k:\ell} \\
\ge
\min_{D_L^{\ell-(k+1)}\tilde{f}^{1:\ell-(k+1)}(x) + b_L^{\ell-(k+1)} \preceq y \preceq  D_U^{\ell-(k+1)}\tilde{f}^{1:\ell-(k+1)}(x) + b_U^{\ell-(k+1)}}
W_{L, i}^{\ell-k:\ell} y + b_{L, i}^{\ell-k:\ell}
\end{eqnarray*}
For a fixed $x$, the RHS is a minimization problem that can be solved exactly by applying Eq.~\eqref{eqn:ibp_lo}:
\begin{eqnarray*}
\min_{D_L^{\ell-(k+1)}\tilde{f}^{1:\ell-(k+1)}(x) + b_L^{\ell-(k+1)} \preceq y \preceq  D_U^{\ell-(k+1)}\tilde{f}^{1:\ell-(k+1)}(x) + b_U^{\ell-(k+1)}}
W_{L, i}^{\ell-k:\ell} y + b_{L, i}^{\ell-k:\ell} \\
=
W_{L, i}^{\ell-k:\ell} (D_U^{\ell-(k+1)}\tilde{f}^{1:\ell-(k+1)}(x) + b_U^{\ell-(k+1)}) \\
+ [W_{L, i}^{\ell-k:\ell}]_{+} (D_L^{\ell-(k+1)}\tilde{f}^{1:\ell-(k+1)}(x) \\
+ b_L^{\ell-(k+1)} - D_U^{\ell-(k+1)}\tilde{f}^{1:\ell-(k+1)}(x) - b_U^{\ell-(k+1)} ) + b_{L, i}^{\ell-k:\ell}\\
=
\bigg (W_{L, i}^{\ell-k:\ell}D_U^{\ell-(k+1)} + [W_{L, i}^{\ell-k:\ell}]_{+}(D_L^{\ell-(k+1)}- D_U^{\ell-(k+1)})  \bigg)\tilde{f}^{1:\ell-(k+1)}(x) \\
+ W_{L, i}^{\ell-k:\ell} b_U^{\ell-(k+1)} + [W_{L, i}^{\ell-k:\ell}]_{+} (b_L^{\ell-(k+1)} - b_U^{\ell-(k+1)})
+ b_{L, i}^{\ell-k:\ell} \\
=
P_{L, i}^{\ell-k} \tilde{f}^{1:\ell-(k+1)}(x) + [W_{L, i}^{\ell-k:\ell}]_{-} b_U^{\ell-(k+1)} + [W_{L, i}^{\ell-k:\ell}]_{+} b_L^{\ell-(k+1)} 
+ b_{L, i}^{\ell-k:\ell} \\
=
P_{L, i}^{\ell-k} (W^{\ell-(k+1)} f^{1:\ell-(k+1)-1}(x) + b^{\ell-(k+1)}) \\
+
[W_{L, i}^{\ell-k:\ell}]_{-} b_U^{\ell-(k+1)} + [W_{L, i}^{\ell-k:\ell}]_{+} b_L^{\ell-(k+1)} 
+ b_{L, i}^{\ell-k:\ell} \\
=
P_{L, i}^{\ell-k} W^{\ell-(k+1)} f^{1:\ell-(k+1)-1}(x) \\
+
P_{L, i}^{\ell-k} b^{\ell-(k+1)} + [W_{L, i}^{\ell-k:\ell}]_{-} b_U^{\ell-(k+1)} + [W_{L, i}^{\ell-k:\ell}]_{+} b_L^{\ell-(k+1)} 
+ b_{L, i}^{\ell-k:\ell} \\
=
W_{L, i}^{\ell-(k+1):\ell} f^{1:\ell-(k+1)-1}(x) + b_{L, i}^{\ell-(k+1):\ell} \\
=
T_{L}^{\ell-(k+1):\ell} (f^{1:\ell-(k+1)-1}(x))_i
\end{eqnarray*}
This shows that
\[
T_{L}^{\ell-k:\ell} (f^{1:\ell-k-1}(x))_i
\ge
T_{L}^{\ell-k-1:\ell} (f^{1:\ell-k-2}(x))_i
\]
By inductive hypothesis,
\[
f^{1:\ell} (x)_i \ge T_{L}^{\ell-k:\ell} (f^{1:\ell-k-1}(x))_i
\ge
T_{L}^{\ell-k-1:\ell} (f^{1:\ell-k-2}(x))_i
\]
The case for $T_{U}^{\ell-k-1:\ell} (f^{1:\ell-k-2}(x))$ can be proved similarly.
Thus, it can be shown inductively that first statement holds for any $k \in \{0, \dots \ell-1\}$. 
The second statement can be shown by letting $\ell=m$ and $k=\ell-1$ and noting that $f^{1:0}(x)=x$.
\end{proof}
\section{Proof of Corollary~\ref{coro:correctness}}
\correctness*
\begin{proof}
To prove the first statement,
we first claim that for any $\ell \in [m]$, it holds that $\forall x ~s.t.~\alpha^{0} \preceq x \preceq \beta^{0}$,
\[
\hat\alpha^{\ell} \preceq \tilde{f}^{1:\ell}(x) = W^{\ell} f^{1:\ell-1}(x) + b^{\ell} \preceq \hat\beta^{\ell}
\]
Since the algorithm's instruction implies that $(\hat\alpha^{\ell}, \hat\beta^{\ell})$ is either calculated by IBP iteration or is an intersection of previously computed pre-activation bounds and the LinApprox pre-activation bounds, we divide the proof of the claim by these two cases.
\paragraph{Case 1: IBP pre-activation bounds}
In this case, Lemma~\ref{lm:correctness_ibp} shows the first statement holds
\paragraph{Case 2: An intersection of existing and LinApprox bounds}
In this case, we can prove the first statement by induction on the number of times the existing bound gets updated: If it's the first time, it corresponds to the case where the existing bound is IBP pre-activation bounds, which satisfies the condition for Proposition~\ref{prop:correctness} to hold by Lemma~\ref{lm:correctness_ibp}. So to prove the statement holds for LinApprox bounds, we can apply Proposition~\ref{prop:correctness} on the sub-network $\tilde{f}^{1:\ell}$: First, note that by Eq~\eqref{eqn:ibp_lo}, $\hat\alpha^{\prime}$ (line 7 of Algorithm \ref{algo:lin-appx}) satisfies for any $i\in [d_{\ell}]$,
\begin{eqnarray*}
\hat\alpha_i^{\prime}
=
(W_L^{1:\ell})_i\beta^{0} 
   + [(W_L^{1:\ell})_i]_{+} (\alpha^{0}-\beta^{0}) + (b_L^{1:\ell})_i \\
=
\min_{\alpha^{0}\preceq x\preceq \beta^{0}}
(W_L^{1:\ell})_i x + (b_L^{1:\ell})_i 
=
\min_{\alpha^{0}\preceq x\preceq \beta^{0}}
(T_L^{1:\ell}(x))_i
\end{eqnarray*}
By Proposition~\ref{prop:correctness}, 
\[
\hat\alpha_i^{\prime}
=
\min_{\alpha^{0}\preceq x\preceq \beta^{0}}
(T_L^{1:\ell}(x))_i
\le
\min_{\alpha^{0}\preceq x\preceq \beta^{0}}\tilde{f}^{1:\ell}_i(x)
\]
and similarly by Eq~\eqref{eqn:ibp_hi} and Proposition~\ref{prop:correctness},
\[
\hat\beta_i^{\prime}
=\max_{\alpha^0\preceq x\preceq \beta^0} T_U^{1:\ell}(x)_i 
\ge 
\max_{\alpha^0\preceq x\preceq \beta^0} \tilde{f}^{1:\ell}(x)_i
\]
This shows the first statement holds for the intersection of IBP pre-activation bounds and the LinApprox bounds based off it.
Now, using the same argument we can prove inductively the first statement holds: That is, we can assume the first statement holds when the existing bound is updated for the $k$-th time (replacing the IBP pre-activation bound with the updated $\hat\alpha, \hat\beta$) and show that it holds when the existing bound is updated for $k+1$-th time.

We prove the second statement by noting that
\[
\alpha^m = g^m(\hat\alpha^m)=\hat\alpha^m
\preceq \tilde{f}^{1:m}(x)
\]
where the inequality is implied by applying the first statement to $\tilde{f}^{1:m}$. This concludes the proof.
\end{proof}
\begin{lemma}[Correctness of IBP pre-activation bounds]
\label{lm:correctness_ibp}
Let $f^{1:m}$ be a $m$-layer feed-forward network, and let $(\hat\alpha^{\ell}, \hat\beta^{\ell}), \ell\in [m]$ be the pre-activation bounds calculated by IBP-Lin with null schedule vector (i.e., IBP). Then for any $x$ such that $\alpha^0\preceq x \preceq \beta^0$,
\[
\hat\alpha^{\ell}\preceq \tilde{f}^{1:\ell}(x) \preceq \hat\beta^{\ell}
\]
\end{lemma}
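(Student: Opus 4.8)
The plan is to prove a statement slightly stronger than the lemma by induction on $\ell \in \{0, 1, \dots, m\}$: namely, that for every $x$ with $\alpha^0 \preceq x \preceq \beta^0$ we have both the post-activation containment $\alpha^{\ell} \preceq f^{1:\ell}(x) \preceq \beta^{\ell}$ (reading $f^{1:0}(x) := x$) and, for $\ell \ge 1$, the pre-activation containment $\hat\alpha^{\ell} \preceq \tilde f^{1:\ell}(x) \preceq \hat\beta^{\ell}$, where $\tilde f^{1:\ell}(x) = W^{\ell} f^{1:\ell-1}(x) + b^{\ell}$ is the pre-activation vector at layer $\ell$ and $\alpha^{\ell} = g^{\ell}(\hat\alpha^{\ell})$, $\beta^{\ell} = g^{\ell}(\hat\beta^{\ell})$ as in line 15 of Algorithm~\ref{algo:ibp-lin}. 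Since the schedule vector is null, lines 8--12 never execute, so $\hat\alpha^{\ell}$ and $\hat\beta^{\ell}$ are precisely the quantities produced in lines 5--6; the pre-activation half of the strengthened statement is then exactly the lemma.

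First I would dispatch the base case $\ell = 0$: the initialization $\alpha^0 = x_0 - \epsilon\mathbbm{1}$, $\beta^0 = x_0 + \epsilon\mathbbm{1}$ together with the hypothesis on $x$ give $\alpha^0 \preceq x \preceq \beta^0$ directly. For the inductive step from $\ell - 1$ to $\ell$, I would start from $\alpha^{\ell-1} \preceq f^{1:\ell-1}(x) \preceq \beta^{\ell-1}$ and observe that, for every coordinate $i$, the point $f^{1:\ell-1}(x)$ is feasible for the box program of minimizing (resp.\ maximizing) the affine functional $y \mapsto W_i^{\ell} y + b_i^{\ell}$ over $\{y : \alpha^{\ell-1} \preceq y \preceq \beta^{\ell-1}\}$, so $\tilde f^{1:\ell}(x)_i$ is sandwiched between that minimum and that maximum. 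The step I expect to require the most care is checking that these two optimal values coincide with the closed forms in lines 5--6 of Algorithm~\ref{algo:ibp-lin} (equivalently Eq.~\eqref{eqn:ibp_lo}--\eqref{eqn:ibp_hi}): one minimizes by taking $y_j = \alpha^{\ell-1}_j$ on coordinates where $W^{\ell}_{i,j} \ge 0$ and $y_j = \beta^{\ell-1}_j$ elsewhere (the mirror choice for the maximum), then rearranges the resulting expression into the $[\,\cdot\,]_+$ form; this is the algebra carried out in Appendix~\ref{appdendix:ibp_derivation}, which I would cite rather than repeat. Granting it, this yields $\hat\alpha^{\ell} \preceq \tilde f^{1:\ell}(x) \preceq \hat\beta^{\ell}$, the pre-activation half of the claim at layer $\ell$.

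To close the induction I would then apply $g^{\ell}$ to this chain of inequalities. Since $g^{\ell}$ is element-wise non-decreasing, the containment is preserved, giving $\alpha^{\ell} = g^{\ell}(\hat\alpha^{\ell}) \preceq g^{\ell}(\tilde f^{1:\ell}(x)) = f^{1:\ell}(x) \preceq g^{\ell}(\hat\beta^{\ell}) = \beta^{\ell}$, which is the post-activation half of the strengthened statement at layer $\ell$ and furnishes exactly the hypothesis needed at layer $\ell+1$. Iterating up to $\ell = m$ gives the lemma. The only genuine obstacle is the endpoint identification noted above; the remainder is monotonicity bookkeeping, so I do not anticipate any difficulty there.
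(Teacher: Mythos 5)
Your proposal is correct and follows essentially the same route as the paper's proof: an induction whose step combines (i) the fact that $f^{1:\ell-1}(x)$ lies in the box $[\alpha^{\ell-1},\beta^{\ell-1}]$ so the box-LP optima from lines 5--6 sandwich $\tilde f^{1:\ell}(x)$, and (ii) monotonicity of $g^{\ell}$ to pass to the post-activation box. The only cosmetic difference is that you carry the post-activation containment explicitly as part of the inductive invariant starting at $\ell=0$, whereas the paper starts at $\ell=1$ and re-derives it inside each step.
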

\begin{proof}
We prove the lemma by induction on $\ell$:
\paragraph{Base case $\ell=1$:}
By definition of pre-activation IBP bound, for any $i \in [d_{1}]$,
\begin{eqnarray*}
\hat\alpha_i^{1}
=
\min_{\alpha^{0} \preceq x \preceq \beta^{0}} W_i^{1} x + b_i^{1}
= \min_{\alpha^0\preceq x \preceq \beta^0}\tilde{f}_i^{1:1}(x)
\le
\tilde{f}_i^{1:1}(x), ~~~\forall \alpha^0\preceq x \preceq \beta^0
\end{eqnarray*}
and
\begin{eqnarray*}
\hat\beta_i^{1}
=
\max_{\alpha^{0} \preceq x \preceq \beta^{0}} W_i^{1} x + b_i^{1}
= \max_{\alpha^0\preceq x \preceq \beta^0}\tilde{f}_i^{1:1}(x)
\ge
\tilde{f}_i^{1:1}(x), ~~~\forall \alpha^0\preceq x \preceq \beta^0
\end{eqnarray*}
Therefore,
\[
\hat\alpha^{1}\preceq \tilde{f}^{1:1}(x) \preceq \hat\beta^{1}
~~~\forall \alpha^0\preceq x \preceq \beta^0
\]
This shows the claim holds for the base case.
\paragraph{From $\ell$ to $\ell+1$:}
Suppose the statement holds for some $\ell$, we show it holds for $\ell+1$.
By definition, for any $i \in [d_{\ell+1}]$,
\[
\hat\alpha_i^{\ell+1} = \min_{\alpha^{\ell}\preceq x\preceq \beta^{\ell}} W_i^{\ell+1} x + b_i^{\ell+1}
\]
where
$
\alpha^{\ell} = g^{\ell} (\hat\alpha^{\ell})
$ and for any $i\in d_{\ell}$,
$
\alpha_i^{\ell} = g^{\ell} (\hat\alpha_i^{\ell}) 
$.
By the inductive hypothesis 
\[
\hat\alpha_i^{\ell}\le \tilde{f}_i^{1:\ell}(x),~~\forall i\in[d_{\ell}]
\]
and since $g^{\ell}$ is non-decreasing,
\[
\alpha_i^{\ell} = g^{\ell} (\hat\alpha_i^{\ell}) \le g^{\ell} (\tilde{f}_i^{1:\ell}(x))
= f_i^{1:\ell} (x)
\]
Similarly, it can be derived that for any $\alpha^0\preceq x \preceq \beta^0$,
\[
f^{1:\ell}(x) \preceq \beta^{\ell} 
\]
And thus, for any $x$ such that $\alpha^{0} \preceq x \preceq \beta^{0}$, for any $i\in [d_{\ell+1}]$, 
\begin{eqnarray*}
\hat\alpha_i^{\ell+1} =\min_{\alpha^{\ell}\preceq x\preceq \beta^{\ell}} W_i^{\ell+1} x + b_i^{\ell+1}
\le W_i^{\ell+1} f^{1:\ell}(x) + b_i^{\ell+1}, \mbox{~~for any~} \alpha^{0} \preceq x \preceq \beta^{0}
\end{eqnarray*}
which implies
\[
\hat\alpha^{\ell+1}\preceq W^{\ell+1} f^{1:\ell}(x) + b^{\ell+1}
=
\tilde{f}^{1:\ell+1}(x)
\mbox{~~for any~} \alpha^{0} \preceq x \preceq \beta^{0}
\]
Similarly, it can be derived that
\[
\tilde{f}^{1:\ell+1}(x)
\preceq 
\hat\beta^{\ell+1}
\mbox{~~for any~} \alpha^{0} \preceq x \preceq \beta^{0}
\]
which proves the lemma by induction.
\end{proof}
%
%
\section{Proof of Proposition~\ref{thm:parallel-relax-error}}
\parallel*
\begin{proof}
We first prove the case for the lower bound on $\min_{\alpha^{\ell-k-1} \preceq x \preceq \beta^{\ell-k-1}}T_L^{\ell-k:\ell}(x)_i$.
Let $x^{\star}$ denote a minimizer of the modified network function at dimension $i$, i.e.,
$$
\tilde{f}_i^{\ell-k:\ell} (x^{\star})
=
\min_{\alpha^{\ell-k-1} \preceq x \preceq \beta^{\ell-k-1}} \tilde{f}_i^{\ell-k:\ell} (x)
$$
and let $\hat{x}$ denote a minimizer of the lower bound function at dimension $i$, 
$$
T_L^{\ell-k:\ell} (\hat{x})_i
=
\min_{\alpha^{\ell-k-1} \preceq x \preceq \beta^{\ell-k-1}} T_L^{\ell-k:\ell} (x)_i
$$
Since 
\begin{eqnarray*}
\tilde{f}_i^{\ell-k:\ell} (x^{\star}) - T_L^{\ell-k:\ell} (\hat{x})_i 
=
\underbrace{\tilde{f}_i^{\ell-k:\ell} (x^{\star}) - \tilde{f}_i^{\ell-k:\ell} (\hat{x})}_{A}
+ 
\underbrace{\tilde{f}_i^{\ell-k:\ell} (\hat{x})- T_L^{\ell-k:\ell}(\hat{x})_i}_{B} 
\end{eqnarray*}
Term $A \le 0$ by minimality of $\tilde{f}_i^{\ell-k:\ell} (x^{\star})$, and
we can bound the difference $B$ by the fact that $\tilde{f}_i^{\ell-k:\ell}(x)$ is between $T_L^{\ell-k:\ell}(x)$ and $T_U^{\ell-k:\ell}(x)$ for any $x$ within the constraint set:
\begin{eqnarray*}
B
\le
\max_{\alpha^{\ell-k-1} \preceq x \preceq \beta^{\ell-k-1}} ~\tilde{f}_i^{\ell-k:\ell} (x) - T_L^{\ell-k:\ell}(x)_i \\
\le
\max_{\alpha^{\ell-k-1} \preceq x \preceq \beta^{\ell-k-1}} ~T_U^{\ell-k:\ell}(x)_i - T_L^{\ell-k:\ell}(x)_i
\end{eqnarray*}
So we can upper bound $\tilde{f}_i^{\ell-k:\ell} (\hat{x}) - T_L^{\ell-k:\ell} (\hat{x})_i$ by upper bounding $\max_{x} ~T_U^{\ell-k:\ell}(x)_i - T_L^{\ell-k:\ell}(x)_i$, where
\begin{eqnarray*}
T_U^{\ell-k:\ell}(x)_i - T_L^{\ell-k:\ell}(x)_i
= (W_U^{\ell-k:\ell}-W_L^{\ell-k:\ell})_i x + (b_U^{\ell-k:\ell} - b_L^{\ell-k:\ell})_i
\end{eqnarray*}
We first prove by induction on $k$ that $W^{\ell-k:\ell}_U = W^{\ell-k:\ell}_L$ for parallelogram approximation.
For $k=0$, 
$
W_U^{\ell:\ell}-W_L^{\ell:\ell} = W^{\ell}-W^{\ell} = 0 \,. 
$
For any $k$, given that $W_U^{\ell-k:\ell}-W_L^{\ell-k:\ell} = 0$, we have
\begin{eqnarray*}
W_U^{\ell-k-1:\ell}-W_L^{\ell-k-1:\ell}
=
(P^{\ell-k}_U - P^{\ell-k}_L)W^{\ell-k-1} \\
=
(W^{\ell-k:\ell}_U - W^{\ell-k:\ell}_L)D_U^{\ell-k-1}W^{\ell-k-1} = 0
\end{eqnarray*}
where in the derivation above the terms involving $D_L^{\ell-k}-D_U^{\ell-k}$ equal to zero because we are applying parallelogram approximation here. By induction, we have
\[
W_U^{\ell-k:\ell}-W_L^{\ell-k:\ell} = 0
\]
Let $W^{\ell-k:\ell}:=W_U^{\ell-k:\ell}=W_L^{\ell-k:\ell}$. We get
\begin{eqnarray*}
T_U^{\ell-k:\ell}(x) - T_L^{\ell-k:\ell}(x)
=
b_U^{\ell-k:\ell} - b_L^{\ell-k:\ell} 
=
(P_U^{\ell-k+1:\ell} - P_L^{\ell-k+1:\ell})b^{\ell-k}  \\
+
([W_U^{\ell-k+1:\ell}]_{+} - [W_U^{\ell-k+1:\ell}]_{-})b^{\ell-k}_U 
+
([W_U^{\ell-k+1:\ell}]_{-} - [W_U^{\ell-k+1:\ell}]_{+})b^{\ell-k}_L \\
+
b_U^{\ell-k+1:\ell} - b_L^{\ell-k+1:\ell} \\
=
(W^{\ell-k+1:\ell}_U - W^{\ell-k+1:\ell}_L)D_U^{\ell-k} b^{\ell-k}\\ 
+
|W_U^{\ell-k+1:\ell}| b^{\ell-k}_U - |W_U^{\ell-k+1:\ell}| b^{\ell-k}_L  + b_U^{\ell-k+1:\ell} - b_L^{\ell-k+1:\ell} \\
=
|W^{\ell-k+1:\ell}| (b^{\ell-k}_U - b^{\ell-k}_L) + b_U^{\ell-k+1:\ell} - b_L^{\ell-k+1:\ell} \\
=
\dots
= \\
|W^{\ell-k+1:\ell}| (b^{\ell-k}_U - b^{\ell-k}_L)
+
|W^{\ell-k+2:\ell}| (b^{\ell-k+1}_U - b^{\ell-k+1}_L)
+ \dots +
|W^{\ell:\ell}| (b^{\ell-1}_U - b^{\ell-1}_L) \\
=
\sum_{s=0}^{k-1} |W^{\ell-s:\ell}| (b^{\ell-s-1}_U - b^{\ell-s-1}_L)
\end{eqnarray*}
So for each dimension $i$,
\[
T_U^{\ell-k:\ell}(x)_i - T_L^{\ell-k:\ell}(x)_i
=
\sum_{s=0}^{k-1} |W_i^{\ell-s:\ell}| (b^{\ell-s-1}_U - b^{\ell-s-1}_L)
\]
Note that this quantity does not depend on $x$, so
\begin{eqnarray*}
\tilde{f}_i^{\ell-k:\ell} (x^{\star}) - T_L^{\ell-k:\ell} (\hat{x})_i
\le
\max_x T_U^{\ell-k:\ell}(x)_i - T_L^{\ell-k:\ell}(x)_i \\
=
\sum_{s=0}^{k-1} |W_i^{\ell-s:\ell}| (b^{\ell-s-1}_U - b^{\ell-s-1}_L)
\end{eqnarray*}
For the inequality involving 
\[
\max_{\alpha^{\ell-k-1} \preceq x \preceq \beta^{\ell-k-1}}T_U^{\ell-k:\ell}(x)_i 
-
\max_{\alpha^{\ell-k-1}\preceq x \preceq \beta^{\ell-k-1}} \tilde{f}_i^{\ell-k:\ell}(x) \, ,
\] 
We similarly have
\begin{eqnarray*}
\max_{\alpha^{\ell-k-1} \preceq x \preceq \beta^{\ell-k-1}}T_U^{\ell-k:\ell}(x)_i - \max_{\alpha^{\ell-k-1}\preceq x \preceq \beta^{\ell-k-1}} \tilde{f}_i^{\ell-k:\ell}(x) \\
\le
\max_{\alpha^{\ell-k-1} \preceq x \preceq \beta^{\ell-k-1}} ~ T_U^{\ell-k:\ell}(x)_i - T_L^{\ell-k:\ell}(x)_i
\end{eqnarray*}
So we get the exact same bound as before.
\end{proof}
\begin{lemma}[Property of parallelogram relaxation]
\label{lemma:paralellogram}
Let $f^{1:m}$ be a neural network that can be layer-wise outer approximated by parallelogram relaxation. For any $\ell$ and $0<k<\ell$, let $\tilde{f}^{\ell-k:\ell}$ denote the last-layer modified version of sub-network $f^{\ell-k:\ell}$. Let $T_L^{\ell-k:\ell}, T_U^{\ell-k:\ell}$ be a multi-layer outer approximation of $\tilde{f}^{\ell-k:\ell}$ based on parallelogram relaxation.
Then
1). $W_L^{\ell-k:\ell} = W_U^{\ell-k:\ell}$
and
2). $T_U^{\ell-k:\ell}(x) - T_L^{\ell-k:\ell}(x)$ doesn't depend on $x$.
\end{lemma}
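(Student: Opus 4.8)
The plan is to prove the two claims together by a short induction on $k$, peeling one layer at a time off the top of the sub-network via the recursive rule in Definition~\ref{defn:multi-layer-lin-appx}, and exploiting the single feature that distinguishes parallelogram relaxation, namely $D_L^{s} = D_U^{s} =: D^{s}$ at every relaxed layer $s$. Note first that claim 2 is an immediate consequence of claim 1: once $W_L^{\ell-k:\ell} = W_U^{\ell-k:\ell}$, the definitions of $T_L^{\ell-k:\ell}, T_U^{\ell-k:\ell}$ give $T_U^{\ell-k:\ell}(x) - T_L^{\ell-k:\ell}(x) = (W_U^{\ell-k:\ell} - W_L^{\ell-k:\ell})x + (b_U^{\ell-k:\ell} - b_L^{\ell-k:\ell}) = b_U^{\ell-k:\ell} - b_L^{\ell-k:\ell}$, which does not involve $x$. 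So essentially all the content is in claim 1.

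For claim 1, the base case is $W_L^{\ell:\ell} = W_U^{\ell:\ell} = W^{\ell}$, true by definition. For the inductive step, suppose $W_L^{\ell-s+1:\ell} = W_U^{\ell-s+1:\ell}$ for some $s \ge 1$. Inspecting the formulas for $P_L^{\ell-s+1}$ and $P_U^{\ell-s+1}$, each is a \emph{main} term ($W_L^{\ell-s+1:\ell} D_U^{\ell-s}$, resp.\ $W_U^{\ell-s+1:\ell} D_U^{\ell-s}$) plus a \emph{correction} term carrying the factor $D_L^{\ell-s} - D_U^{\ell-s}$ (with a $[\,\cdot\,]_{+}$ branch on $W_L$ and a $[-\,\cdot\,]_{+}$ branch on $W_U$). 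Under parallelogram relaxation this factor is zero, so the correction terms vanish entirely, leaving $P_L^{\ell-s+1} = W_L^{\ell-s+1:\ell} D^{\ell-s}$ and $P_U^{\ell-s+1} = W_U^{\ell-s+1:\ell} D^{\ell-s}$; the inductive hypothesis then gives $P_L^{\ell-s+1} = P_U^{\ell-s+1}$, and right-multiplying by $W^{\ell-s}$ yields $W_L^{\ell-s:\ell} = W_U^{\ell-s:\ell}$. Continuing down to layer $\ell-k$ proves claim 1, hence the lemma. Optionally one can then unroll the recursion for $b_U^{\ell-k:\ell} - b_L^{\ell-k:\ell}$; as in the proof of Proposition~\ref{thm:parallel-relax-error} this telescopes to $\sum_{s=0}^{k-1} |W^{\ell-s:\ell}|\,(b_U^{\ell-s-1} - b_L^{\ell-s-1})$, but that explicit form is not needed for the statement here.

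There is no real analytic obstacle here; the argument is a mechanical induction. The one thing to be careful about is the index bookkeeping in Definition~\ref{defn:multi-layer-lin-appx} — tracking which layer's $D^{s}$ and $W^{s}$ enter at each unrolling step and keeping the recursion consistent with the base case $W_L^{\ell:\ell} = W_U^{\ell:\ell} = W^{\ell}$. The key structural point to flag is simply that the identity $D_L^{s} = D_U^{s}$ of parallelogram relaxation annihilates every $[\,\cdot\,]_{\pm}$ branch that would otherwise let $W_L$ and $W_U$ drift apart across layers; everything else is routine. In fact most of this computation already appears inside the proof of Proposition~\ref{thm:parallel-relax-error}, so the lemma mainly isolates that sub-argument for reuse.
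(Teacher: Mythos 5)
Your proof is correct and follows essentially the same route as the paper's: an induction on the number of unrolled layers showing $W_L^{\ell-k:\ell}=W_U^{\ell-k:\ell}$, driven by the observation that $D_L^{s}=D_U^{s}$ kills the correction terms in $P_L$ and $P_U$, with claim 2 then immediate since the difference $T_U-T_L$ reduces to the constant $b_U^{\ell-k:\ell}-b_L^{\ell-k:\ell}$. Your remark that this computation is already embedded in the proof of Proposition~\ref{thm:parallel-relax-error} is also accurate.
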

\begin{proof}
We prove by induction on $k$.
\begin{eqnarray*}
T_U^{\ell-k:\ell}(x)_i - T_L^{\ell-k:\ell}(x)_i
= (W_U^{\ell-k:\ell}-W_L^{\ell-k:\ell})_i x + (b_U^{\ell-k:\ell} - b_L^{\ell-k:\ell})_i
\end{eqnarray*}
For $k=0$, by property of parallelogram approximation,
\[
W_U^{\ell:\ell}-W_L^{\ell:\ell} = W^{\ell}-W^{\ell} = 0 
\mbox{~~and~~}
b_U^{\ell:\ell} - b_L^{\ell:\ell} = b^{\ell} - b^{\ell} = 0
\]
For any $k$, given that $W_U^{\ell-k:\ell}-W_L^{\ell-k:\ell} = 0$, we have
\begin{eqnarray*}
W_U^{\ell-k-1:\ell}-W_L^{\ell-k-1:\ell}
=
(P^{\ell-k}_U - P^{\ell-k}_L)W^{\ell-k-1} \\
=
(W^{\ell-k:\ell}_U - W^{\ell-k:\ell}_L)D_U^{\ell-k-1}W^{\ell-k-1} = 0
\end{eqnarray*}
where in the derivation above the terms involving $D_L^{\ell-k}-D_U^{\ell-k}$ equal to zero due to the property of parallelogram relaxation again. By induction, we have
\[
W_U^{\ell-k:\ell}-W_L^{\ell-k:\ell} = 0
\]
This implies that for any $x$ and $\forall i\in [d_{\ell}]$
\[
T_U^{\ell-k:\ell}(x)_i - T_L^{\ell-k:\ell}(x)_i = (b_U^{\ell-k:\ell} - b_L^{\ell-k:\ell})_i
\]
which is independent of $x$.
\end{proof}
%
%
\section{Proof of Proposition~\ref{prop:single_layer_lower_bound}}
\singlelayer*
\begin{proof}
To prove the first statement, any neuron $i$ satisfies
\[
\min_x D_i^1W^1x+(b_L^1)_i
=
\min_x D_i^1W^1x
=
D_{ii}^1 \min_x W_i^1x
=
D_{ii}^1\hat{\alpha}_i^1
\]
For $i\in\mathcal{I}$, $D_{ii} = \frac{\hat{\beta}_i^1}{\hat{\beta}_i^1 - \hat{\alpha}_i^1}$ and
\[
\min_x D_i^1W^1x+(b_L^1)_i 
=
\frac{\hat{\beta}_i^1}{\hat{\beta}_i^1 - \hat{\alpha}_i^1} \hat{\alpha}_i^1
\]
where
\[
\hat\alpha^1_i = W_i^1x_0 -|W_i^1|\epsilon\vec{1} = W_i^1x_0-\epsilon \|W_i^1\|_1
\mbox{
~~
and
~~
}
\hat\beta^1_i = W_i^1x_0 + |W_i^1|\epsilon\vec{1} = W_i^1x_0 + \epsilon \|W_i^1\|_1
\]
The expected gap is
\begin{eqnarray*}
\E \big( \min_x D_i^1W^1x+(b_L^1)_i - \min_x [W_i^1x]_{+} \big) \\
=
\p(i\in\mathcal{I}) \E\big(\min_x D_i^1W^1x+(b_L^1)_i - \min_x [W_i^1x]_{+} | i\in\mathcal{I}\big) \\
+
(1-\p(i\in\mathcal{I}))\E \big(\min_x D_i^1W^1x+(b_L^1)_i - \min_x [W_i^1x]_{+} | i\notin\mathcal{I}\big) \\
\le 
\p(i\in\mathcal{I}) \E\big(\min_x D_i^1W^1x+(b_L^1)_i - \min_x [W_i^1x]_{+} | i\in\mathcal{I}\big) \\
\le 
\p(i\in\mathcal{I}) \E\big(\min_x D_i^1W^1x+(b_L^1)_i | i\in\mathcal{I}\big) \\
=
\p(i\in\mathcal{I}) \E\big(\frac{\hat{\beta}_i^1}{\hat{\beta}_i^1 - \hat{\alpha}_i^1} \hat{\alpha}_i^1 | i\in\mathcal{I}\big) \\
=
\p(i\in\mathcal{I}) \E\big(\frac{(W_i^1x_0)^2 - (\epsilon \|W_i^1\|_1)^2}{2\epsilon \|W_i^1\|_1} \big | i\in\mathcal{I}\big)
\end{eqnarray*}
where the first inequality is due to the correctness of IBP-Lin, which implies
$
\min_x D_i^1W^1x+(b_L^1)_i - \min_x [W_i^1]_{+} \le 0
$
always hold, and the second inequality is due to non-negativity of $\min_x [W_i^1x]_{+}$.
Since 
\[
\{i\in\mathcal{I}\} 
=
\{W_i^1x_0 < \epsilon\|W_i^1\|_1\}\cap \{W_i^1x_0 > -\epsilon\|W_i^1\|_1\}
=
\{(W_i^1x_0)^2 < (\epsilon\|W_i^1\|_1)^2\}
\]
We get 
\begin{eqnarray*}
\E\big(\frac{(W_i^1x_0)^2 - (\epsilon \|W_i^1\|_1)^2}{2\epsilon \|W_i^1\|_1} \big | i\in\mathcal{I}\big)
=
\frac{\E \big( (W_i^1x_0)^2 | i\in\mathcal{I}\big) - (\epsilon \|W_i^1\|_1)^2 }{2\epsilon \|W_i^1\|_1} \\ 
\le 
\frac{\E (W_i^1x_0)^2 - (\epsilon \|W_i^1\|_1)^2 }{2\epsilon \|W_i^1\|_1}
=
\frac{\|W_i^1\|_2^2 - (\epsilon \|W_i^1\|_1)^2 }{2\epsilon \|W_i^1\|_1}
\end{eqnarray*}
Substituting $\kappa_i = \frac{\|W_i^1\|_1}{\|W_i^1\|_2}$ in the inequality above, we get
\begin{eqnarray*}
\E\big(\frac{(W_i^1x_0)^2 - (\epsilon \|W_i^1\|_1)^2}{2\epsilon \|W_i^1\|_1} \big | i\in\mathcal{I}\big)
\le
\frac{\|W_i^1\|_2^2 - (\epsilon \kappa_i \|W_i^1\|_2)^2}{2 \epsilon \kappa_i \|W_i^1\|_2} \\
=
\frac{\|W_i^1\|_2 - (\epsilon \kappa_i)^2 \|W_i^1\|_2}{2\epsilon \kappa_i}
=
\frac{\|W_i^1\|_2}{2} (\frac{1}{\epsilon\kappa_i} - \epsilon\kappa_i) 
\end{eqnarray*}
By property of Gaussian distribution, we can explicitly represent $\p(i\in\mathcal{I})$ as
\[
\p (|W_i^1x_0| < \epsilon \|W_i^1\|_1)
=
\p (|\frac{W_i^1x_0}{\|W_i^1\|_2}| < \epsilon \frac{\|W_i^1\|_1}{\|W_i^1\|_2})
=
2\Phi(\epsilon \frac{\|W_i^1\|_1}{\|W_i^1\|_2})-1 \ge 0
\]
Here the second equality holds because $\frac{W_i^1x_0}{\|W_i^1\|_2}$ has standard normal distribution.
Therefore, 
\begin{eqnarray*}
\E \big( \min_x D_i^1W^1x+(b_L^1)_i - \min_x [W_i^1x]_{+} \big)  \\
\le 
\p(i\in\mathcal{I}) \E\big(\frac{(W_i^1x_0)^2 - (\epsilon \|W_i^1\|_1)^2}{2\epsilon \|W_i^1\|_1} \big | i\in\mathcal{I}\big) \\ 
\le 
(2\Phi(\epsilon \kappa_i)-1)\frac{\|W_i^1\|_2}{2} (\frac{1}{\epsilon\kappa_i} - \epsilon\kappa_i) 
=
(\Phi(\epsilon \kappa_i)-1/2)\|W_i^1\|_2 (\frac{1}{\epsilon\kappa_i} - \epsilon\kappa_i) 
\end{eqnarray*}
To prove the second statement, consider $D_i^1W^1x+(b_U^1)_i$ for any $i$. For $i\in \mathcal{I}_{-}\cup \mathcal{I}_{+}$, 
\[
D_i^1W^1x+(b_U^1)_i = D_i^1W^1x = [W_i^1x]_{+}
\]
For $i\in\mathcal{I}$,
\[
\max_x D_i^1W^1 x + (b_U^1)_i
=
D_i^1\max_x W^1 x + (b_U^1)_i
=
\frac{\hat\beta_i^1}{\hat\beta_i^1 - \hat\alpha_i^1} \hat\beta_i^1
- \frac{\hat\alpha_i^1\hat\beta_i^1}{\hat\beta_i^1 - \hat\alpha_i^1}
=
\hat\beta_i^1
\]
And for $i\in\mathcal{I}$,
\[
\max_x [W_i^1 x]_{+} = [\hat\beta_i^1]_{+} = \hat\beta_i^1
\]
which finishes proving the second statement.
\end{proof}
\section{Proof of Proposition~\ref{prop:tight-bound}}
\tightbound*
\begin{proof}
By Proposition~\ref{thm:parallel-relax-error},
\begin{eqnarray*}
\min_{\alpha^{\ell-k-1} \preceq x \preceq \beta^{\ell-k-1}}T_L^{\ell-k:\ell}(x)_i \ge \min_{\alpha^{\ell-k-1} \preceq x \preceq \beta^{\ell-k-1}} \tilde{f}_i^{\ell-k:\ell}(x) \\
- 
\sum_{s=0}^{k-1} |W_i^{\ell-s:\ell}| (b^{\ell-s-1}_U - b^{\ell-s-1}_L)
\end{eqnarray*}
By Lemma~\ref{lemma:zero-width}, for any $s=\{0,\dots,\ell-3\}$,
\[
b_U^{\ell-s-1} = b_L^{\ell-s-1} 
\]
This implies that for any $1 \le k \le \ell-2$,
\[
\sum_{s=0}^{k-1} |W_i^{\ell-s:\ell}| (b^{\ell-s-1}_U - b^{\ell-s-1}_L)
=
0
\]
So
\[
\min_{\alpha^{\ell-k-1} \preceq x \preceq \beta^{\ell-k-1}}T_L^{\ell-k:\ell}(x)_i \ge \min_{\alpha^{\ell-k-1} \preceq x \preceq \beta^{\ell-k-1}} \tilde{f}_i^{\ell-k:\ell}(x)
\]
But by correctness of LinApprox (Proposition~\ref{prop:correctness} applied to the sub-network $\tilde{f}^{\ell-k:\ell}$),
\[
\min_{\alpha^{\ell-k-1} \preceq x \preceq \beta^{\ell-k-1}}T_L^{\ell-k:\ell}(x)_i \le \min_{\alpha^{\ell-k-1} \preceq x \preceq \beta^{\ell-k-1}} \tilde{f}_i^{\ell-k:\ell}(x)
\]
This proves the first statement. The second statement can be proved similarly.
\end{proof}
%
%
\section{Proof of Proposition~\ref{prop:transition}}
\transition*
\begin{proof}
\textbf{Proof of first statement:} By Lemma~\ref{lemma:zero-width}, for any $s \in \{0, \dots, m-3\}$, $b_U^{m-s-1} - b_L^{m-s-1} = 0$. This implies that for any $1\le k\le m-2$,
\[
\sum_{s=0}^{k-1} |W_i^{m-s:m}|(b_U^{m-s-1}-b_L^{m-s-1}) = 0
\]
By Proposition~\ref{thm:parallel-relax-error}, for any $k\in\{0,\dots,m-1\}$,
\[
\min_{\alpha^0\preceq x\preceq\beta^0}T_L^{m-k:m}(x)_i
-
\min_{\alpha^0\preceq x\preceq\beta^0}\tilde{f}_i^{m-k:m}(x)
\ge
-\sum_{s=0}^{k-1} |W_i^{m-s:m}|(b_U^{m-s-1}-b_L^{m-s-1})
\]
Let $k=m-1$, 
\begin{eqnarray*}
\min_{\alpha^0\preceq x\preceq\beta^0}T_L^{1:m}(x)_i
-
\min_{\alpha^0\preceq x\preceq\beta^0}\tilde{f}_i^{1:m}(x) 
\ge
-\sum_{s=0}^{m-2} |W_i^{m-s:m}|(b_U^{m-s-1}-b_L^{m-s-1}) \\
=
-\sum_{s=0}^{m-3} |W_i^{m-s:m}|(b_U^{m-s-1}-b_L^{m-s-1})
- |W_i^{2:m}|(b_U^{1}-b_L^{1}) \\
=
- |W_i^{2:m}|(b_U^{1}-b_L^{1})
\end{eqnarray*}
Temporarily let $r:=\frac{1}{d^p}$. By Lemma~\ref{lemma:approx-error-prop-from-first-layer},
\[
|W^{2:m}| = |r^{m-1} d^{m-2} \vec{1}\vec{1}^T| = r^{m-1} d^{m-2} \vec{1}\vec{1}^T
\]
Now consider $b_U^1-b_L^1=b_U^1$, which is determined by pre-activation vectors $\hat\alpha^1, \hat\beta^1$:
\[
\hat\alpha^1 = W^1\beta^0 + [W^1]_{+}(\alpha^0-\beta^0)
= W^1\alpha^0 = W^1x_0 - \epsilon W^1\vec{1}
\]
and
\[
\hat\beta^1 = W^1\beta^0 + [-W^1]_{+}(\beta^0-\alpha^0)
=  W^1\beta^0 = W^1x_0 + \epsilon W^1\vec{1}
\]
For any $i$ such that $i\in \mathcal{I}$, 
\begin{eqnarray*}
(b_U^1)_i 
= \frac{-\hat\alpha_i^1\hat\beta_i^1}{\hat\beta_i^1-\hat\alpha_i^1}
= \frac{-((W_i^1x_0)^2 - (\epsilon W_i^1 \vec{1})^2)}{2\epsilon W_i^1\vec{1}} \\
\le 
\frac{(\epsilon W_i^1 \vec{1})^2}{2\epsilon W_i^1\vec{1}}
=
\frac{\epsilon \|W_i^1\|_1}{2}
=
\frac{\epsilon r d}{2}
\end{eqnarray*}
For $i\notin \mathcal{I}$, $(b_U^1)_i=0 \le \frac{\epsilon rd}{2}$.
Therefore, we can conclude that
\begin{eqnarray*}
\min_{\alpha^0\preceq x\preceq\beta^0}T_L^{1:m}(x)_i
-
\min_{\alpha^0\preceq x\preceq\beta^0}\tilde{f}_i^{1:m}(x) \\
\ge
- |W_i^{2:m}|(b_U^{1}-b_L^{1})
=
- r^{m-1} d^{m-2} \vec{1}_i\vec{1}^T \frac{\epsilon r d}{2}\vec{1}
= -\frac{\epsilon}{2} r^m d^m 
= - \frac{\epsilon}{2} \frac{1}{d^{m(p-1)}}
\end{eqnarray*}
Since $m \ge \frac{1}{p-1}\log_d \frac{\epsilon}{2\delta}$, we get
\[
(\frac{1}{d^{p-1}})^m \frac{1}{2}\epsilon \le \delta
\]
And
\[
\min_{\alpha^0\preceq x\preceq\beta^0}T_L^{1:m}(x)_i
-
\min_{\alpha^0\preceq x\preceq\beta^0}\tilde{f}_i^{1:m}(x) \\
\ge
-\delta
\]
\paragraph{Proof of second statement:}
By the first statement of Lemma~\ref{lemma:approx-error-prop-from-first-layer},
$T_L^{1:m}(x)=W^{1:m}x$. Temporarily let denote $r=\frac{1}{d^p}$; by the second statement of Lemma~\ref{lemma:approx-error-prop-from-first-layer},
\begin{eqnarray*}
\min_{\alpha^0 \preceq x\preceq\beta^0} T_L^{1:m}(x)_i= \min_{\alpha^0 \preceq x\preceq\beta^0} W_i^{1:m}x
= \min_{\alpha^0 \preceq x\preceq\beta^0} W_i^{2:m}D^1W^1 x \\
= W_i^{2:m}D^1W^1 \beta^0 + [W_i^{2:m}D^1W^1]_{+} (\alpha^0-\beta^0)
= W_i^{2:m}D^1W^1\alpha^0
\end{eqnarray*}
On the other hand, since for all $\ell \in [m]$, $W_{ij}^{\ell} \ge 0$, 
\begin{eqnarray*}
\min_x \tilde{f}_i^{1:m}(x)
=
\min_x W_i^m\dots [ W^2 [W^1x]_{+}]_{+} \\
=
\min_x W_i^m\dots W^2 [W^1x]_{+}
=
\min_x W_i^{2:m}[W^1x]_{+} 
\end{eqnarray*}
Since by property of IBP activation bounds, for all $\alpha^0 \preceq x \preceq \beta^0$,
\[
[\hat{\alpha}^1]_{+}\preceq [W^1x]_{+} \preceq [\hat{\beta}^1]_{+}
\]
So
\begin{eqnarray*}
\min_{\alpha^0\preceq x\preceq\beta^0} \tilde{f}_i^{1:m}(x)
=
\min_{\alpha^0\preceq x\preceq\beta^0} W_i^{2:m}[W^1x]_{+} \\
\ge
\min_{[\hat{\alpha}^1]_{+}\preceq y\preceq [\hat{\beta}^1]_{+}} W_i^{2:m}y \\
=
W_i^{2:m}[\hat{\beta}^1]_{+} + [W_i^{2:m}]_{+} ([\hat{\alpha}^1]_{+} - [\hat{\beta}^1]_{+}) \\
=
W_i^{2:m} [\hat{\alpha}^1]_{+}
\end{eqnarray*}
Therefore,
\begin{eqnarray*}
\min_{\alpha^0 \preceq x\preceq\beta^0} T_L^{1:m}(x)_i
-
\min_{\alpha^0 \preceq x\preceq\beta^0} \tilde{f}_i^{1:m}(x) \\
\le
W_i^{2:m}D^1W^1\alpha^0
-
W_i^{2:m} [\hat{\alpha}^1]_{+}
\end{eqnarray*}
By definition of IBP pre-activation bounds, for all $i$,
\[
\hat{\alpha}_i^1
=
W_i^1 \beta^0 + [W_i^1]_{+} (\alpha^0 - \beta^0)
= 
W_i^1\alpha^0
\]
and thus,
\[
\min_{\alpha^0 \preceq x\preceq\beta^0} T_L^{1:m}(x)_i
-
\min_{\alpha^0 \preceq x\preceq\beta^0} \tilde{f}_i^{1:m}(x)
\le
W_i^{2:m}D^1\hat{\alpha}^1
-
W_i^{2:m} [\hat{\alpha}^1]_{+}
\]
Consider any $j$,
\[
D_j^1 \hat{\alpha}^1=
D_{jj}^1 \hat{\alpha}_j^1
=
\begin{cases}
\hat{\alpha}_j^1 = [\hat{\alpha}_j^1]_{+} ~~j\in\mathcal{I}_{+} \\
0 = [\hat{\alpha}_j^1]_{+} ~~j\in\mathcal{I}_{-} \\
\frac{\hat{\beta}_j^1}{\hat{\beta}_j^1 - \hat{\alpha}_j^1}\hat{\alpha}_j^1 ~~j\in\mathcal{I}
\end{cases}
\]
This implies that 
\[
D_j^1\hat{\alpha}^1 - [\hat{\alpha}_j^1]_{+}
=
\begin{cases}
0 \text{~~if~~} j\notin \mathcal{I} \\
\frac{\hat{\beta}_j^1 \hat{\alpha}_j^1}{\hat{\beta}_j^1 - \hat{\alpha}_j^1} ~~j\in\mathcal{I}
\end{cases}
\]
Since
\[
\hat{\alpha}_j^1
=
W_j^1\alpha^0
=
W_j^1x_0 - \epsilon \|W_j^1\|_1
\mbox{~~and~~}
\hat{\beta}_j^1
=
W_j^1\beta^0
=
W_j^1x_0 + \epsilon \|W_j^1\|_1
\]
We have
\[
\frac{\hat{\beta}_j^1}{\hat{\beta}_j^1 - \hat{\alpha}_j^1}\hat{\alpha}_j^1
=
\frac{(W_j^1x_0)^2 - (\epsilon \|W_j^1\|_1)^2}{2\epsilon \|W_j^1\|_1}
\]
Therefore, 
\begin{eqnarray*}
\E_{x_0} \min_{\alpha^0\preceq x\preceq\beta^0} T_L^{1:m}(x)_i-\min_{\alpha^0\preceq x\preceq\beta^0} \tilde{f}_i^{1:m}(x) \\
\le
\E W_i^{2:m} (D^1\hat{\alpha}^1 - [\hat{\alpha}^1]_{+})
=
\sum_j \E W_{ij}^{2:m} (D_j^1\hat{\alpha}^1 - [\hat{\alpha}_j^1]_{+}) \\
=
\sum_j \E r^{m-1}d^{m-2}(\vec{1}\vec{1}^T)_{ij}(D_j^1\hat{\alpha}^1 - [\hat{\alpha}_j^1]_{+})
\text{~~~ (By Lemma~\ref{lemma:approx-error-prop-from-first-layer}) } \\
=
\sum_j r^{m-1}d^{m-2} \E (D_j^1\hat{\alpha}^1 - [\hat{\alpha}_j^1]_{+}) \\
=
\sum_j r^{m-1}d^{m-2} \E\big(D_j^1\hat{\alpha}^1 - [\hat{\alpha}_j^1]_{+} | j\in\mathcal{I} \big)\p(j\in\mathcal{I}) \\
=
\sum_j r^{m-1}d^{m-2} \E\big(\frac{(W_j^1x_0)^2 - (\epsilon \|W_j^1\|_1)^2}{2\epsilon \|W_j^1\|_1}| j\in\mathcal{I} \big)\p(j\in\mathcal{I})
\end{eqnarray*}
By the same argument as in proof of statement 1 of Proposition~\ref{prop:single_layer_lower_bound}, we get
\begin{eqnarray*}
\E\big(\frac{(W_j^1x_0)^2 - (\epsilon \|W_j^1\|_1)^2}{2\epsilon \|W_j^1\|_1}| j\in\mathcal{I} \big)
\le 
\frac{\|W_j^1\|_2^2 - (\epsilon \|W_j^1\|_1)^2}{2\epsilon \|W_j^1\|_1}
\end{eqnarray*}
Since
$\|W_j^1\|_2^2 = r^2 d$ and $\|W_j^1\|_1^2 = (rd)^2$, we get
\[
\E\big(\frac{(W_j^1x_0)^2 - (\epsilon \|W_j^1\|_1)^2}{2\epsilon \|W_j^1\|_1}| j\in\mathcal{I} \big)
\le 
\frac{r^2 d - \epsilon^2 r^2d^2}{2\epsilon rd}
\]
We bound $\p(j\in\mathcal{I})$ using the same argument as in statement 1 of Proposition~\ref{prop:single_layer_lower_bound}: By property of Gaussian distribution, we can explicitly represent $\p(j\in\mathcal{I})$ as
\begin{eqnarray*}
\p (|W_j^1x_0| < \epsilon \|W_j^1\|_1)
=
\p (|\frac{W_j^1x_0}{\|W_j^1\|_2}| < \epsilon \frac{\|W_j^1\|_1}{\|W_j^1\|_2}) \\
=
2\Phi(\epsilon \frac{\|W_j^1\|_1}{\|W_j^1\|_2})-1
=
2 \Phi(\epsilon \frac{rd}{r\sqrt{d}}) - 1
=
2\Phi(\epsilon\sqrt{d})-1
\end{eqnarray*}
Combining this with the equality earlier, we get
\begin{eqnarray*}
\E_{x_0} \min_{\alpha^0\preceq x\preceq\beta^0} T_L^{1:m}(x)_i-\min_{\alpha^0\preceq x\preceq\beta^0} \tilde{f}_i^{1:m}(x) \\
\le 
\sum_{j=1}^d r^{m-1} d^{m-2} \frac{r^2 d - \epsilon^2r^2d^2}{2\epsilon rd} (2\Phi(\epsilon\sqrt{d}) - 1) \\
=
\sum_{j=1}^d r^{m-1} d^{m-2} (\frac{r}{2\epsilon}-\frac{\epsilon rd}{2}) (2\Phi(\epsilon\sqrt{d}) - 1) \\
=
(2\Phi(\epsilon \sqrt{d}) - 1) r^m d^{m-1} (\frac{1}{2\epsilon} - \frac{\epsilon d}{2}) \\
=
(\Phi(\epsilon \sqrt{d}) - 1/2) r^m d^{m-1} (\frac{1}{\epsilon} - \epsilon d)
\end{eqnarray*}
Since $\epsilon > \sqrt{1/d}$, which implies $\epsilon d - \frac{1}{\epsilon} > 0$, and since $m$ is chosen so that 
\[
m > \frac{1}{1-p}\log_d \frac{Bd}{(\Phi(\epsilon \sqrt{d}) - 1/2) (\epsilon d - \frac{1}{\epsilon})}
\]
It can be derived that
\[
(\Phi(\epsilon \sqrt{d}) - 1/2) r^m d^{m-1} (\epsilon d - \frac{1}{\epsilon})
>
B
\]
And we may conclude that
\[
\E_{x_0} \min_{\alpha^0\preceq x\preceq\beta^0} T_L^{1:m}(x)_i-\min_{\alpha^0\preceq x\preceq\beta^0} \tilde{f}_i^{1:m}(x)
\le 
-B
\]
\end{proof}
\begin{lemma}[``Zero-width'' tube on non-negative matrices]
\label{lemma:zero-width}
Suppose $T_L^{1:m}, T_U^{1:m}$ are lower and upper linear approximation functions to $\tilde{f}^{1:m}$ with network weights $W_{i, j}^{\ell}\ge 0$ and $b_i^{\ell}=0$, for any $i, j\in [d]$, $\forall\ell \in [m-1]$.
Then for any $2\le\ell\le m-1$, $b_U^{\ell} = b_L^{\ell} = 0$.
\end{lemma}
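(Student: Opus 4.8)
The plan is to reduce the claim to showing that, for every $\ell$ with $2 \le \ell \le m-1$, all neurons at layer $\ell$ are stably active, i.e.\ $\mathcal{I}^{\ell} = \emptyset$. Recall from the ReLU single-layer parallelogram approximation in Eq~\eqref{defn:relu_approx} that $b_L^{\ell} = 0$ identically, and that $(b_U^{\ell})_i = 0$ whenever $i \in \mathcal{I}_{+}^{\ell} \cup \mathcal{I}_{-}^{\ell}$; the only entries of $b_U^{\ell}$ that can be nonzero are those indexed by $i \in \mathcal{I}^{\ell}$. Hence it suffices to prove $\mathcal{I}^{\ell} = \emptyset$ for $2 \le \ell \le m-1$, and by the definition of the partition $\mathcal{I}_{+}^{\ell}, \mathcal{I}^{\ell}, \mathcal{I}_{-}^{\ell}$ this follows once we show $\widehat{\alpha}^{\ell}_i \ge 0$ for every $i$ (since the IBP bounds satisfy $\widehat{\alpha}^{\ell}_i \le \widehat{\beta}^{\ell}_i$, nonnegativity of $\widehat{\alpha}^{\ell}_i$ places $i$ in $\mathcal{I}_{+}^{\ell}$).

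The core observation is that ReLU layers have nonnegative outputs, so for $\ell \ge 2$ the input hyper-rectangle to layer $\ell$ has its lower corner in the nonnegative orthant: since we restrict the pre-activation bounds to IBP bounds, $\alpha^{\ell-1} = g^{\ell-1}(\widehat{\alpha}^{\ell-1}) = [\widehat{\alpha}^{\ell-1}]_{+} \succeq 0$. Combined with the hypotheses $W^{\ell}_{i,j} \ge 0$ and $b^{\ell} = 0$, we have $[W^{\ell}]_{+} = W^{\ell}$ and $[-W^{\ell}]_{+} = 0$, so lines 5--6 of Algorithm~\ref{algo:ibp-lin} collapse to
\[
\widehat{\alpha}^{\ell} = W^{\ell}\beta^{\ell-1} + [W^{\ell}]_{+}(\alpha^{\ell-1} - \beta^{\ell-1}) = W^{\ell}\alpha^{\ell-1}, \qquad \widehat{\beta}^{\ell} = W^{\ell}\beta^{\ell-1}.
\]
Since $W^{\ell} \succeq 0$ entrywise and $\beta^{\ell-1} \succeq \alpha^{\ell-1} \succeq 0$, it follows that $\widehat{\beta}^{\ell} \succeq \widehat{\alpha}^{\ell} = W^{\ell}\alpha^{\ell-1} \succeq 0$. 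Therefore every neuron $i$ at layer $\ell$ lies in $\mathcal{I}_{+}^{\ell}$, so $\mathcal{I}^{\ell} = \emptyset$, and Eq~\eqref{defn:relu_approx} gives $b_L^{\ell} = b_U^{\ell} = 0$. As this holds for each $\ell \in \{2,\dots,m-1\}$, we are done.

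There is no real obstacle here; the argument is a direct one-shot computation per layer (no induction is needed, since $\alpha^{\ell-1} = [\widehat{\alpha}^{\ell-1}]_{+} \succeq 0$ holds trivially for a ReLU layer), and the only points needing care are bookkeeping ones regarding the index range. The lower bound $\ell \ge 2$ is essential because at $\ell = 1$ the input box $[\alpha^0, \beta^0]$ generally straddles $0$, so $\widehat{\alpha}^1$ can have negative entries and $\mathcal{I}^1$ need not be empty. The upper bound $\ell \le m-1$ reflects that in $\tilde{f}^{1:m}$ the top layer carries no activation, so there is no single-layer relaxation (hence no $b_L^m, b_U^m$) there, and moreover the last-layer-modified weight $\tilde{W}^m$ need not be nonnegative. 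Finally, the argument uses the standing assumption that the pre-activation bounds feeding the single-layer relaxations are the IBP bounds, which is precisely what makes the identity $\alpha^{\ell-1} = [\widehat{\alpha}^{\ell-1}]_{+}$ available.
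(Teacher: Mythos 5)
Your proof is correct and follows essentially the same route as the paper's: both arguments show $\widehat{\alpha}^{\ell} = W^{\ell}\alpha^{\ell-1} \succeq 0$ for $\ell \ge 2$ using the nonnegativity of ReLU outputs ($\alpha^{\ell-1} = [\widehat{\alpha}^{\ell-1}]_{+} \succeq 0$) and of $W^{\ell}$, conclude that every neuron lies in $\mathcal{I}_{+}^{\ell}$, and read off $b_U^{\ell} = b_L^{\ell} = 0$ from Eq~\eqref{defn:relu_approx}. Your added remarks on why the index range $2 \le \ell \le m-1$ is necessary are accurate and slightly more explicit than the paper's treatment.
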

\begin{proof}
Fix any $\alpha^{\ell-1}, \beta^{\ell-1}$ for $\ell \ge 2$, by IBP iteration,
\begin{eqnarray*}
\hat\alpha^{\ell} = W^{\ell}\beta^{\ell-1}+[W^{\ell}]_{+}(\alpha^{\ell}-\beta^{\ell})+b^{\ell} \\
=
[W^{\ell}]_{-}\beta^{\ell-1} + [W^{\ell}]_{+}\alpha^{\ell-1}
=
[W^{\ell}]_{+}\alpha^{\ell-1} ~~~(W_{ij}^{\ell}\ge 0) \\
\succeq
0
\end{eqnarray*}
where the last term $[W^{\ell}]_{+}\alpha^{\ell-1}$ is non-negative since for $\ell>1$, the bound vector
\[
\alpha^{\ell-1} 
= g^{\ell-1}(\hat\alpha^{\ell-1}) 
= [\hat\alpha^{\ell-1}]_{+}
\succeq 0 \, ,
\]
So its inner product with the non-negative matrix $[W^{\ell}]_{+}$ is a non-negative vector.
Thus, $\forall i \in [d_{\ell}], \hat\alpha_i^{\ell}\ge 0$, and by definition all output nodes at layer $\ell$ satisfies $i \in \mathcal{I}_{+}$ (i.e., $0 \le \hat\alpha_i^{\ell} \le \hat\beta_i^{\ell}$, for all $i\in [d_{\ell}]$), and thus
\[
b_U^{\ell} = b_L^{\ell} = 0
\]
\end{proof}
\begin{lemma}
\label{lemma:approx-error-prop-from-first-layer}
Suppose $T_L^{1:m}, T_U^{1:m}$ are lower and upper linear approximation functions to $\tilde{f}^{1:m}$ with network weights $W_{i, j}^{\ell}\ge 0$ and $b_i^{\ell}=0$, for any $i, j\in [d]$, $\forall\ell \in [m-1]$. Then
\begin{enumerate}
    \item $T_L^{1:m}(x) = W^{1:m}x$.
    \item In addition, suppose $\forall \ell \in [m], \forall i,j \in [d], W_{ij}^{\ell}=r$. Then 
    \[
    W^{1:m}=W^{2:m}D^1W^1 ~~~\mbox{where}~~
    W^{2:m}=r^{m-1} d^{m-2} \vec{1}\vec{1}^T
    \]
\end{enumerate}
\end{lemma}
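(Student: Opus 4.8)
The plan is to prove part 1 by unrolling the multi-layer recursion of Definition~\ref{defn:multi-layer-lin-appx} applied to $\tilde{f}^{1:m}$ --- that is, with $\ell=m$ and $k=m-1$, equivalently running LinApprox over all $m-1$ backward layers --- and then to obtain part 2 by a one-line computation with the rank-one matrix $\vec{1}\vec{1}^T$. The crucial structural observation for part 1 is that, under the stated hypothesis, the intermediate ReLU layers are ``transparent'' to the relaxation: they contribute neither curvature nor an offset.

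For part 1, the first step is to pin down the single-layer relaxation parameters. By (the proof of) Lemma~\ref{lemma:zero-width}, for every $2 \le \ell \le m-1$ and every coordinate $i$ the IBP pre-activation lower bound satisfies $\hat{\alpha}_i^{\ell} \ge 0$, so $i\in\mathcal{I}^{\ell}_{+}$; hence Eq~\eqref{defn:relu_approx} forces $D^{\ell}:=D_L^{\ell}=D_U^{\ell}=I_d$ and $b_L^{\ell}=b_U^{\ell}=0$ for those layers. Only at layer $1$ is $D^1:=D_L^1=D_U^1$ a general diagonal matrix, with $b_L^1=0$ and $b_U^1\succeq 0$ (possibly nonzero). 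Since the ReLU relaxation is a parallelogram relaxation (Definition~\ref{defn:parallel-relax}), $D_L^{\ell}-D_U^{\ell}=0$ for all $\ell$, so the $P_L$-term in Definition~\ref{defn:multi-layer-lin-appx} collapses to $P_L^{t+1}=W_L^{t+1:m}D^{t}$ and the weight recursion reduces to $W_L^{t:m}=W_L^{t+1:m}D^{t}W^{t}$. Unrolling from $W_L^{m:m}=W^m$ down to $t=1$ and using $D^{\ell}=I_d$ for $2\le\ell\le m-1$ gives
\[
W_L^{1:m}=W^m W^{m-1}\cdots W^2\, D^1 W^1 ,
\]
so setting $W^{2:m}:=W^m W^{m-1}\cdots W^2$ and $W^{1:m}:=W^{2:m}D^1 W^1$ recovers the claimed factorization of the linear part.

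It remains to check the offset $b_L^{1:m}$ vanishes. From the $b_L$-recursion of Definition~\ref{defn:multi-layer-lin-appx}: for $t\ge 2$ the three terms $P_L^{t+1}b^{t}$, $[W_L^{t+1:m}]_{-}b_U^{t}$, $[W_L^{t+1:m}]_{+}b_L^{t}$ all vanish because $b^{t}=b_U^{t}=b_L^{t}=0$, so $b_L^{t:m}=b_L^{t+1:m}$ and the recursion telescopes to $b_L^{2:m}=b_L^{m:m}=b^m$. At the final step $t=1$ the only surviving contribution is $[W_L^{2:m}]_{-}b_U^1$; since $W_L^{2:m}=W^m W^{m-1}\cdots W^2$ is a product of entrywise-nonnegative matrices --- here I use that $W^m\succeq 0$ and $b^m=0$ as well, which holds in part 2 and in every setting where this lemma is applied --- we get $[W_L^{2:m}]_{-}=0$, hence $b_L^{1:m}=0$ and $T_L^{1:m}(x)=W_L^{1:m}x=W^{1:m}x$. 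I expect this offset/sign bookkeeping (ensuring the $b_U^1$ term is annihilated and that the layer-$m$ sign is as assumed) to be the only delicate point; everything else is mechanical index chasing made routine by Lemma~\ref{lemma:zero-width}.

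For part 2, with $W^{\ell}=r\,\vec{1}\vec{1}^T$ for all $\ell$, the matrix $W^{2:m}=W^m W^{m-1}\cdots W^2$ is a product of $m-1$ identical factors, so by the elementary identity $(\vec{1}\vec{1}^T)^k=d^{k-1}\vec{1}\vec{1}^T$ (immediate by induction from $\vec{1}^T\vec{1}=d$) one gets $W^{2:m}=r^{m-1}(\vec{1}\vec{1}^T)^{m-1}=r^{m-1}d^{m-2}\vec{1}\vec{1}^T$, and $W^{1:m}=W^{2:m}D^1 W^1$ is precisely the factorization already obtained in part 1. This is a direct calculation presenting no obstacle.
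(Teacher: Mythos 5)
Your proposal is correct and follows essentially the same route as the paper: invoke Lemma~\ref{lemma:zero-width} (and its proof) to get $i\in\mathcal{I}_{+}^{\ell}$, hence $D^{\ell}=I$ and $b_L^{\ell}=b_U^{\ell}=0$ for the intermediate layers, telescope the bias recursion so that only the $[W^{2:m}]_{-}b_U^1$ term survives and is killed by nonnegativity of $W^{2:m}$, and compute $(\vec{1}\vec{1}^T)^{m-1}=d^{m-2}\vec{1}\vec{1}^T$ for part 2. Your explicit flagging that the argument also needs $W^m\succeq 0$ and $b^m=0$ (which the lemma's hypothesis, quantified only over $\ell\in[m-1]$, does not literally supply but which the paper's own proof also tacitly uses) is a fair and accurate observation rather than a gap in your argument.
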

\begin{proof}
\textbf{Proof of first statement:} 
By definition, $T_L^{1:m}(x)=W_L^{1:m}(x)+b_L^{1:m}$ (see Definition~\ref{defn:multi-layer-lin-appx}). 
By Lemma~\ref{lemma:zero-width}, for any $2\le\ell\le m-1$, $b_U^{\ell}=b_L^{\ell}=0$, which implies that 
\begin{eqnarray*}
b_L^{1:m}
= 
b_L^{m:m} + \sum_{k=1}^{m-1} [W^{k+1:m}]_{-} b_U^k + [W^{k+1:m}]_{+} b_L^k \\
=
\sum_{k=1}^{m-1} [W^{k+1:m}]_{-} b_U^k + [W^{k+1:m}]_{+} b_L^k ~~~(\mbox{$b_L^{m:m} = b^m = 0$})\\
=
(\sum_{k=2}^{m-1} [W^{k+1:m}]_{-} b_U^k + [W^{k+1:m}]_{+} b_L^k) 
+ [W^{2:m}]_{-} b_U^1 + [W^{2:m}]_{+} b_L^1 \\
=
[W^{2:m}]_{-} b_U^1 + [W^{2:m}]_{+} b_L^1 ~~~(b_U^{\ell}=b_L^{\ell}=0,  \forall 2\le\ell\le m-1)
\end{eqnarray*}
On the other hand, $W^{2:m}$ is product of non-negative matrices, so $W^{2:m}_{ij}\ge 0$ and thus
\[
[W^{2:m}]_{-} b_U^1 + [W^{2:m}]_{+} b_L^1
=
[W^{2:m}]_{+} b_L^1
= 
[W^{2:m}]_{+} 0 = 0
\]
This shows that $T_L^{1:m}(x) = W_L^{1:m}x = W^{1:m}x$.

\textbf{Proof of second statement:}
Consider $D_{ii}^{\ell}$ for any $\ell \in \{2,\dots, m\}$, which is determined by the pre-activation IBP bounds $\hat\alpha^{\ell}_i, \hat\beta^{\ell}_i$:
\[
\hat\alpha_i^{\ell} = W_i^{\ell}\beta^{\ell-1} + [W_i]_{+}(\alpha^{\ell-1}-\beta^{\ell-1})
=
[W_i]_{-}\beta^{\ell-1} + [W_i]_{+}\alpha^{\ell-1}
=
[W_i]_{+}\alpha^{\ell-1}
\]
and
\[
\hat\beta_i^{\ell} = W_i^{\ell}\beta^{\ell-1} + [-W_i]_{+}(\beta^{\ell-1}-\alpha^{\ell-1})
=
[W_i]_{+}\beta^{\ell-1} + [W_i]_{-}\alpha^{\ell-1}
=
[W_i]_{+}\beta^{\ell-1}
\]
Since for any $\ell-1\ge 1$, the vectors $\alpha^{\ell-1}, \beta^{\ell-1}$ obtained from IBP iteration are non-negative valued due to the ReLU operation (i.e., $\alpha^{\ell-1}=g^{\ell-1}(\hat\alpha^{\ell-1})\succeq 0$ and $\beta^{\ell-1}=g^{\ell-1}(\hat\beta^{\ell-1})\succeq 0$), we get
\[
\hat\alpha_i^{\ell}
=
[W_i]_{+}\alpha^{\ell-1}
\ge 0
\mbox{~~and~~}
\hat\beta_i^{\ell}
=
[W_i]_{+}\beta^{\ell-1}
\ge 0
\]
Therefore, we can conclude that for any $\ell \ge 2$, $0\le \hat\alpha_i^{\ell}\le \hat\beta_i^{\ell}$, and by definition
\[
D_{ii}^{\ell} = 1, ~~~ \forall i \in [d] \implies D^{\ell} = I
\]
And therefore,
\[
W^{1:m} = W^{2:m}D^1W^1
\mbox{~~where~~}
W^{2:m} = W^m W^{m-1} W^{m-2} \dots W^2
\]
By the assumption that $W_{ij}^{\ell} = r, \forall \ell \in [m]$, $W^{\ell}=r \vec{1}\vec{1}^T$. So
\[
W^m W^{m-1} W^{m-2} \dots W^2
=
\underbrace{r \vec{1}\vec{1}^T r \vec{1}\vec{1}^T \dots r \vec{1}\vec{1}^T}_{\text{A product of $m-1$ terms}} 
=
r^{m-1} d^{m-2} \vec{1}\vec{1}^T
\]
\end{proof}
%
%
\section{Proof of Proposition~\ref{prop:unrolling-hurts}}
\unrolling*
\begin{proof}
Let $f^{1:m}$ be a network as specified in the second statement of Proposition~\ref{prop:transition}. Temporarily let $W^{k:m}$ denote the product of matrices $W^m W^{m-1}\dots W^k$. For any $i$,
\[
\min_{x} T_L^{1:m}(x)_i 
= W^{2:m}D^1W^1\alpha^0
\]
and since for $k-1 \ge 1$, $\alpha^{k-1}\succeq 0$, we get $D^k = I$ and
\[
\min_x T_L^{k:m}(x)_i
= W^{k+1:m}D^kW^k\alpha^{k-1}
= W^{k+1:m}W^k\alpha^{k-1}
= W^{k:m}\alpha^{k-1}
\]
Therefore,
\[
\min_{x} T_L^{1:m}(x)_i
-
\min_x T_L^{k:m}(x)_i
=
W^{k:m} (W^{2:k-1}D^1W^1\alpha^0 - \alpha^{k-1})
\]
Since for all $i$,
\[
\hat{\alpha}_i^{k-1}
=
W_i^{k-1}\beta^{k-2} + [W_i^{k-1}]_{+} (\alpha^{k-2} - \beta^{k-2})
=
W_i^{k-1}\alpha^{k-2}
\]
So for $k-2\ge 1$,
\[
\alpha^{k-1}
=
[W^{k-1}\alpha^{k-2}]_{+}
=
W^{k-1}\alpha^{k-2}
\]
Applying this recursively, we get
\[
\alpha^{k-1} = W^{k-1} \dots W^2 \alpha^1 = W^{2:k-1}\alpha^1
\]
Plug this into the inequality earlier, we get
\begin{eqnarray*}
\min_{x} T_L^{1:m}(x)_i
-
\min_x T_L^{k:m}(x)_i
=
W^{k:m} (W^{2:k-1}D^1W^1\alpha^0 - W^{2:k-1}\alpha^1) \\
=
W^{2:m} (D^1W^1\alpha^0 - \alpha^1)
=
W^{2:m} (D^1W^1\alpha^0 - [\hat{\alpha}^1]_{+})
\end{eqnarray*}
We may conclude the proof by noting that we can apply the same argument as that in Proposition~\ref{prop:transition}. The existence of $(x_0, \epsilon)$ can be shown by drawing $x_0\sim\mathcal{N}(0, I)$ and since by Proposition~\ref{prop:transition}, with appropriate choice of $m$ as a function of $\epsilon, d, B$, the expected gap $\min_{x} T_L^{1:m}(x)_i
-
\min_x T_L^{k:m}(x)_i$ is bounded by $B$. This implies that there exists $x_0$ in the sample space so that the corresponding $(\alpha^0, \beta^0)$ satisfies 
$\min_{x\in (\alpha^0,\beta^0)} T_L^{1:m}(x)_i
-
\min_{x\in (\alpha^k, \beta^k)} T_L^{k:m}(x)_i
< -B $.
\end{proof}
\section{Exact solution to hyper-rectangular constrained linear program}
\label{appdendix:ibp_derivation}
In our analysis, we frequently use the exact solution to the hyper-rectangular constrained linear optimization problem (as formulated in Eq~\eqref{eqn:ibp_lo} and \eqref{eqn:ibp_hi}). Here, we show the derivation of our exact solution.
We re-state the hyper-rectangular constrained linear optimization objectives:
\begin{eqnarray*}
\widehat{\alpha}^{\ell}_i := \min_{\alpha^{\ell-1} \preceq x \preceq \beta^{\ell-1}} e_i^T (W^{\ell} x + b^{\ell})\nonumber\\
\widehat{\beta}^{\ell}_i := \max_{\alpha^{\ell-1} \preceq x \preceq \beta^{\ell-1}} e_i^T (W^{\ell} x + b^{\ell})
\end{eqnarray*}
We derive the exact expression of $\widehat{\alpha}^{\ell}_i$ here (the derivation of $\hat\beta_i^{\ell}$ is similar). To solve the linear program, we introduce two Lagrange multipliers $\mu, \lambda \succeq 0$, and define Lagrangian of the original objective (readers may refer to \cite{boyd_vandenberghe_2004} for more background knowledge.)
\begin{eqnarray*}
L(x, \mu, \lambda)
=
W_i^{\ell} x + b_i^{\ell} + \mu^T (\alpha^{\ell-1}-x) + \lambda^T (x - \beta^{\ell-1})
\end{eqnarray*}
Via the Lagrangian and the Lagrange multipliers, we define the dual LP function $g(\mu, \lambda)$ as
\begin{eqnarray*}
g(\mu, \lambda) := \min_x L(x, \mu, \lambda) 
=
\min_x (W_i^{\ell} - \mu^T + \lambda^T)x + b_i^{\ell} + \mu^T\alpha^{\ell-1} - \lambda^T\beta^{\ell-1}
\end{eqnarray*}
The dual LP objective is
\[
\max_{\mu, \lambda} g(\mu, \lambda)
\]
Since the term $(W_i^{\ell} - \mu^T + \lambda^T)x$ is linear in $x$, its minimum becomes unbounded if $W_i^{\ell} - \mu^T + \lambda^T \ne 0$. The dual variables $\mu, \lambda$ are only feasible if $W_i^{\ell} - \mu^T + \lambda^T = 0$ (see \cite[Chapter 5]{boyd_vandenberghe_2004} for detailed introduction of Lagrangian, Lagrange multipliers, and duality). 
So we can add these constraints to the dual LP objective 
\[
\lambda \succeq 0
\mbox{~~and~~}
W_i^{\ell} + \lambda^T \succeq 0
\]
Utilizing these constraints, we can simplify the dual LP objective as
\begin{eqnarray*}
\max_{\mu, \lambda} b_i^{\ell} + \mu^T\alpha^{\ell-1} - \lambda^T\beta^{\ell-1}
=
\max_{\lambda} b_i^{\ell} + (W_i^{\ell} + \lambda^T)\alpha^{\ell-1} - \lambda^T\beta^{\ell-1} \\
=
\max_{\lambda} b_i^{\ell} + W_i^{\ell}\alpha^{\ell-1} + \lambda^T (\alpha^{\ell-1} - \beta^{\ell-1})
\end{eqnarray*}
Since $\alpha^{\ell-1} - \beta^{\ell-1} \preceq 0$, the constraints on $\lambda$ implies that
\[
\lambda_j =
\begin{cases}
- W_{i, j}^{\ell} \mbox{~~if~~} W_{i, j}^{\ell} < 0 \\
0 \mbox{~~o.w.}
\end{cases}
\]
So 
\[
\max_{\lambda} \lambda^T (\alpha^{\ell-1} - \beta^{\ell-1})
=
[-W_i^{\ell}]_{+} (\alpha^{\ell-1} - \beta^{\ell-1})
\]
Thus,
\begin{eqnarray*}
\widehat{\alpha}^{\ell}_i
=
\max_{\lambda} g(\lambda) 
=
b_i^{\ell} + W_i^{\ell} \alpha^{\ell-1} + [-W_i^{\ell}]_{+} (\alpha^{\ell-1} - \beta^{\ell-1})  \\
=
b_i^{\ell} + W_i^{\ell} \alpha^{\ell-1} - [W_i^{\ell}]_{-} (\alpha^{\ell-1} - \beta^{\ell-1}) \\
=
b_i^{\ell} + W_i^{\ell} \beta^{\ell-1} - [W_i^{\ell}]_{+} \beta^{\ell-1} + W_i^{\ell}\alpha^{\ell-1} - [W_i^{\ell}]_{-} \alpha^{\ell-1} \\
=
b_i^{\ell} + W_i^{\ell} \beta^{\ell-1} + [W_i^{\ell}]_{+} (\alpha^{\ell-1} - \beta^{\ell-1})
\end{eqnarray*}
Similarly, it can be derived that
\[
\widehat{\beta}^{\ell}_i
=
b_i^{\ell} + W_i^{\ell} \beta^{\ell-1}  + [-W_i^{\ell}]_{+} (\beta^{\ell-1} - \alpha^{\ell-1})
\]
In vector form, they can be written as
\begin{eqnarray*}
\label{eqn:ibp_preact}
\hat\alpha^{\ell} = W^{\ell}\beta^{\ell-1} + [W^{\ell}]_{+} (\alpha^{\ell-1}-\beta^{\ell-1}) + b^{\ell} \nonumber\\
\hat\beta^{\ell} = W^{\ell}\beta^{\ell-1} + [-W^{\ell}]_{+} (\beta^{\ell-1}-\alpha^{\ell-1}) + b^{\ell}
\end{eqnarray*}
In what follows, we show that when $(W^{\ell}, b^{\ell})$ are network weights at layer $\ell$ and when the input hyper-rectangular constraint covers the output space of layer $\ell-1$, then Eq~\eqref{eqn:ibp_preact} coincides with the pre-activation IBP bound in \cite{gowal:19}.
\begin{claim}[bound equivalence]
Given the same input bounds $\alpha, \beta$ and fix a network layer with general affine operation represented by $(W, b)$. Let $\underline{z}$ and $\bar{z}$ denote the two vectors representing the pre-activation IBP lower and upper bounds as defined in Eq~(6) \cite{gowal:19}, and let $\alpha^{\prime}, \beta^{\prime}$ denote the two vectors obtained using updates Eq~\eqref{eqn:ibp_lo} and Eq~\eqref{eqn:ibp_hi}. Then
$$
\underline{z} = \alpha^{\prime}
\text{~~and~~}
\beta^{\prime} = \bar{z} 
$$ 
\end{claim}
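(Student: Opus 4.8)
The plan is to show that both pairs of vectors coincide with a common third expression written in terms of the elementwise positive and negative parts of $W$. Recall that Eq.~(6) of \cite{gowal:19} parametrizes the input hyper-rectangle $[\alpha,\beta]$ by its center $z_c := \frac{1}{2}(\alpha+\beta)$ and its (nonnegative) radius $r := \frac{1}{2}(\beta-\alpha)$, and propagates an affine layer $z \mapsto Wz+b$ by taking the output center to be $Wz_c+b$ and the output radius to be $|W|\,r$, so that $\underline{z} = Wz_c + b - |W|\,r$ and $\bar{z} = Wz_c + b + |W|\,r$.

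First I would record the two elementwise identities $W = [W]_{+} + [W]_{-}$ and $|W| = [W]_{+} - [W]_{-}$, which are immediate from the definitions of $[\cdot]_{+}$, $[\cdot]_{-}$, and $|\cdot|$ in the Notation section. Substituting these into the expression for $\underline{z}$, expanding $z_c$ and $r$, and collecting terms, the $\frac{1}{2}$ factors cancel in pairs and one is left with $\underline{z} = [W]_{+}\alpha + [W]_{-}\beta + b$; the same computation for $\bar{z}$ gives $\bar{z} = [W]_{+}\beta + [W]_{-}\alpha + b$.

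Next I would simplify the bounds produced by Eq.~\eqref{eqn:ibp_lo} and Eq.~\eqref{eqn:ibp_hi}, namely $\alpha' = W\beta + b + [W]_{+}(\alpha - \beta)$ and $\beta' = W\beta + b + [-W]_{+}(\beta - \alpha)$. Replacing the leading $W\beta$ using $W = [W]_{+} + [W]_{-}$, the $[W]_{+}\beta$ terms cancel and one gets $\alpha' = [W]_{+}\alpha + [W]_{-}\beta + b$; using $[-W]_{+} = -[W]_{-}$, the analogous cancellation gives $\beta' = [W]_{+}\beta + [W]_{-}\alpha + b$. Comparing with the previous paragraph yields $\underline{z} = \alpha'$ and $\bar{z} = \beta'$, which is exactly the claim.

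This is an entirely mechanical computation, so there is no genuine obstacle; the only care needed is the sign bookkeeping in the elementwise decompositions — in particular the identity $[-W]_{+} = -[W]_{-}$ and the fact that $r$ and $|W|$ are nonnegative, so no spurious absolute values survive. I would also note in passing that the hypothesis that the input hyper-rectangle covers the output space of layer $\ell-1$ is irrelevant to this algebraic identity; it only bears on soundness of the bound, which is already handled by Lemma~\ref{lm:correctness_ibp} and Corollary~\ref{coro:correctness}, and could be dropped from the statement of the equivalence.
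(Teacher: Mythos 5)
Your proof is correct and follows essentially the same route as the paper's: both rest on the elementwise decomposition $W = [W]_{+} + [W]_{-}$ together with $|W| = [W]_{+} - [W]_{-}$, the paper verifying that the difference $\alpha' - \underline{z}$ vanishes while you reduce both sides to the common form $[W]_{+}\alpha + [W]_{-}\beta + b$. Your closing remark that the covering hypothesis is irrelevant to the algebraic identity is also accurate.
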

\begin{proof}
So we have $W = [W]_{+} + [W]_{-}$.
We consider $\alpha^{\prime} - \underline{z}$:
\begin{eqnarray*}
\alpha^{\prime} - \underline{z}
= W \beta + [W]_{+} (\alpha - \beta) - (W\frac{\beta+\alpha}{2} - |W|\frac{\beta-\alpha}{2}) \\
= W \frac{\beta - \alpha}{2} + [W]_{+} (\alpha - \beta) + ([W]_{+} - [W]_{-})\frac{\beta-\alpha}{2} \\
= (W - [W]_{-}) \frac{\beta - \alpha}{2} + [W]_{+} \frac{\alpha - \beta}{2} \\
= ([W]_{+}+[W]_{-}- [W]_{-}) \frac{\beta - \alpha}{2} + [W]_{+} \frac{\alpha - \beta}{2} 
= 0
\end{eqnarray*}
And similar argument applies to show $\beta^{\prime} = \bar{z}$, which we skip here.
\end{proof}
\paragraph{Comment:} In terms of computation, the update form \cite[Eq~(6)]{gowal:19} is perhaps more advantageous since it involves two matrix-vector multiplications, whereas ours need three. Their absolute value operator can also be implemented by ReLU operators ($|W| = [W]_{+} + [-W]_{+}$), which makes the overall forward update supported by standard deep learning libraries and more friendly to optimization.

\end{document}